\setlist{leftmargin=10mm}
\newif\iffinal
    \newcommand{\tianhao}[1]{}
    \newcommand{\ruoxi}[1]{}
    \newcommand{\rebuttal}[1]{#1}
    \newcommand{\tianhao}[1]{{\bf \textcolor{purple}{[Tianhao: #1]}}}
    \newcommand{\ruoxi}[1]{{\bf \textcolor{violet}{[Ruoxi: #1]}}}
    \newcommand{\rebuttal}[1]{\textcolor{blue}{#1}}
\newtheorem{theorem}{Theorem}
\newtheorem{definition}[theorem]{Definition}
\newtheorem{remark}{Remark}
\newtheorem{remark-star}{Remark}
\newtheorem{remark-star-1}{Remark}
\newcommand{\E}{\mathbb{E}}
\newcommand{\R}{\mathbb{R}}
\newcommand{\eps}{\varepsilon}
\newcommand{\norm}[1]{\left\lVert#1\right\rVert}
\newcommand{\g}{\nabla}
\newcommand{\iden}{\bm{I}}
\newcommand{\dataset}{\mathcal{D}}
\newcommand{\batch}{\mathcal{B}}
\newcommand{\batcht}{\batch_t}
\newcommand{\batchtstart}{\batch_{\start}}
\newcommand{\del}{\partial}
\newcommand{\hessian}{\mathbf{H}}
\newcommand{\start}{t_s}
\newcommand{\emb}{\texttt{DVEmb}}
\newcommand{\embtstart}{\emb^{(\start)}}
\newcommand{\embt}{\emb^{(t)}}
\newcommand{\embmiddle}[2]{\emb^{(#1 \to #2)}}
\newcommand{\KernelMatrix}[2]{\textbf{K}^{(#1 \to #2)}}
\newcommand{\zstar}{z^*}
\newcommand{\zval}{z^{(val)}}
\newcommand{\infl}{\textbf{IF}}
\newcommand{\outerprod}{\otimes}
\newcommand{\embM}{\mathbf{M}}
\newcommand{\projdimgrad}{\Tilde{p}}
\newcommand{\bs}{\textbf{s}}
\newcommand{\ba}{\textbf{a}}
\newcommand{\bW}{\textbf{W}}
\newcommand{\Pa}{\textbf{P}_{\ba}}
\newcommand{\Ps}{\textbf{P}_{\bs}}
\newcommand{\projdim}{r}
\newcommand{\commentcode}[1]{\texttt{// #1}}
\newcommand{\ta}{t_a}
\newcommand{\tb}{t_b}
\newcommand{\middlestep}{t_{\ell}}
\newcommand{\middleprevstep}{t_{\ell-1}}
\newcommand{\mO}{\mathcal{O}}
\newcommand{\unrollest}{\Delta \theta_{-\zstar}}
\newcommand{\seqtstart}{\sigma^{(\start)}}
\newcommand{\seqtrealization}{\sigma^{(t_r)}}
\newcommand{\expectedHessian}{\hessian_T^{(*)}}
\newcommand{\loo}{\texttt{TSLOO}}
\newcommand{\lootstart}{\loo^{(\start)}}
\newcommand{\eloo}{\texttt{ELOO}}
\newcommand{\A}{\mathcal{A}}
\title{Capturing the Temporal Dependence of Training Data Influence}
\author{
Jiachen T. Wang$^{\mathbf{\star}}$ \\
Princeton University \\
\AND
Dawn Song \\
UC Berkeley \\
\And 
James Zou \\
Stanford University \\
\And 
Prateek Mittal \\
Princeton University \\
\And 
Ruoxi Jia$^{\mathbf{\star}}$ \\
Virginia Tech \\
}
\begin{document}

\maketitle

\newcommand\blfootnote[1]{%
  \begingroup
  \renewcommand\thefootnote{}\footnote{#1}%
  \addtocounter{footnote}{-1}%
  \endgroup
}

\begin{abstract}
Traditional data influence estimation methods, like influence function, assume that learning algorithms are permutation-invariant with respect to training data. However, modern training paradigms, especially for foundation models using stochastic algorithms and multi-stage curricula, are sensitive to data ordering, thus violating this assumption. This mismatch renders influence functions inadequate for answering a critical question in machine learning: How can we capture the dependence of data influence on the optimization trajectory during training? To address this gap, we formalize the concept of trajectory-specific leave-one-out (LOO) influence, which quantifies the impact of removing a data point from a specific iteration during training, accounting for the exact sequence of data encountered and the model's optimization trajectory. However, exactly evaluating the trajectory-specific LOO presents a significant computational challenge. To address this, we propose data value embedding, a novel technique enabling efficient approximation of trajectory-specific LOO. Specifically, we compute a training data embedding that encapsulates the cumulative interactions between data and the evolving model parameters. The LOO can then be efficiently approximated through a simple dot-product between the data value embedding and the gradient of the given test data. As data value embedding captures training data ordering, it offers valuable insights into model training dynamics. In particular, we uncover distinct phases of data influence, revealing that data points in the early and late stages of training exert a greater impact on the final model. These insights translate into actionable strategies for managing the computational overhead of data selection by strategically timing the selection process, potentially opening new avenues in data curation research.\blfootnote{$^\mathbf{\star}$Correspondence to \textbf{Jiachen T. Wang} and \textbf{Ruoxi Jia} (\texttt{tianhaowang@princeton.edu}, \texttt{ruoxijia@vt.edu}).}

\end{abstract}

\section{Introduction}

\textbf{Data influence estimation} aims to provide insights into the impact of specific data points on the model's predictive behaviors. Such understanding is crucial not only for model transparency and accountability \citep{koh2017understanding} but also plays a significant role in addressing AI copyright debates \citep{deng2023computational, wang2024economic} and facilitating fair compensation in data marketplaces \citep{tian2022private}. The majority of data influence estimation techniques focus on measuring the counterfactual impact of a training data point: \emph{how would the model's behavior change if we removed a specific training data point?}

\textbf{LOO \rebuttal{Influence}.} This counterfactual impact is often characterized by the \emph{Leave-One-Out} (LOO) \rebuttal{influence}, which has a long history and is frequently utilized in various fields such as robust statistics \citep{cook1980characterizations}, generalization analysis \citep{bousquet2002stability}, and differential privacy \citep{dwork2006calibrating}. Inheriting from this rich classical literature across various domains, the LOO influence in data influence studies is typically defined as 
$
\texttt{LOO}(\zstar; \zval) := \ell(\A(\dataset), \zval) - \ell(\A(\dataset \setminus \{\zstar\}), \zval)
$, i.e., the model's loss change on a validation data $\zval$ when the training data point $\zstar$ is removed from the training set $\dataset$. Here, $\A$ is the learning algorithm. 
For ease of analysis, traditional literature usually assumes that the learning algorithm $\A$ is permutation-invariant with respect to the training set $\dataset$, meaning that \emph{the order of data points does not affect the learning outcome} \citep{bousquet2002stability}. This assumption holds for models with strongly convex loss functions trained to converge.
Within this framework, researchers have developed efficient methods to approximate LOO. 
Influence function \citep{koh2017understanding}, which uses first-order Taylor expansion to estimate the LOO, emerging as the most prominent approach. Numerous follow-up works have further improved its scalability for large models and datasets \citep{guo2021fastif,schioppa2022scaling,grosse2023studying, choe2024your}. 

However, modern training algorithms, particularly those used for foundation models, increasingly deviate from the permutation-invariant assumption. This deviation arises from both the non-convex nature of neural networks and the multi-stage training curricula that do not run to convergence. In particular, due to the immense size of datasets, large language models (LLMs) often undergo just one training epoch, meaning each data point is encountered only once during training. Consequently, training data order significantly shapes the influence of data points on the final model \citep{epifano2023revisiting,nguyen2024bayesian}. Due to their underlying assumption of permutation-invariance, the order-dependence of data influence in modern training paradigms is not accurately reflected by influence functions. For example, they assign identical influence scores to duplicate training points, regardless of their position in the training sequence. 

Therefore, in this work, we argue that designing a data influence estimation technique relevant to the modern ML context requires rethinking how the counterfactual impact should be defined. Towards that end, we formalize the concept of \emph{trajectory-specific LOO}, which characterizes the loss change resulting from removing a data point from the specific iteration it is used during training. In contrast to the traditional LOO, trajectory-specific LOO explicitly accounts for the exact sequence of data encountered, considering the timing of a target training point being trained on. 
An accurate evaluation of trajectory-dependent LOO would enable us to answer many important questions that are impossible to address with influence functions. For instance, how does a data point's impact vary depending on its entry timing in the training process? How do later points affect the influence of earlier points?

However, exactly evaluating the trajectory-specific LOO presents a significant computational challenge. To address this,
we introduce \textbf{data value embedding}, a novel data influence estimation framework designed for approximating trajectory-specific LOO. Our approach achieves several nice properties at the same time: \textbf{(1) accounting for training dynamics} and reflecting how the data order impacts model training; 
\textbf{(2) scale efficiently} to the setting of foundation models, and is faster than the current most efficient implementation of influence function;
\textbf{(3) enable real-time attribution} for any query without necessitating model retraining or prior access to validation data.

\textbf{Technical novelty.} 
Our proposed \emph{data value embedding} framework computes a compact representation for each data point that encapsulates the cumulative effect of subsequent training. The influence scores for any test instance can be approximated with a simple dot product operation between the test gradient and the data value embedding, enabling real-time computation of data influence scores. 
To improve the scalability of computing data influence embedding, we develop a suite of techniques for efficient computation and storage of data value embeddings. In particular, we introduce the \emph{influence checkpointing} technique, which enables the parallel computation of data value embeddings at multiple checkpoints. This not only enhances computational efficiency but also allows tracking of how a fixed data point's value changes during the training process.

\textbf{Empirical insights.} Through data value embedding, we obtain several novel empirical insights into the training dynamics of foundation models. 
We identified three distinct regimes of data influence (Figure \ref{fig:influence-plot-pythia-Pile} (a)): a very brief high-influence region at the start, a much longer low-influence basin, and a region in the later training stage with gradually increasing influence, resuming to a high level. We show that performing online data selection solely in the early and late high-influence regions (less than half of the training duration) can achieve performance improvements on par with selecting data throughout the entire process (Figure \ref{fig:influence-plot-pythia-Pile} (b)). Moreover, performing data selection \citep{fan2024doge} only in the first very brief high-influence region, lasting less than 4\% of the training duration, can achieve $\approx 50\%$ of the performance gain enabled by continuous selection. 
Since online data selection usually incurs significant computational costs, our findings suggest a viable way of managing this overhead by strategically timing the selection process. By focusing data selection efforts on these critical phases, we can substantially improve training efficiency without compromising model performance. These temporal insights can potentially embark on new avenues of research on budget-limited data curation.
\begin{figure}[h]
    \centering
    \setlength\intextsep{0pt}
    \setlength\abovecaptionskip{0pt}
    \setlength\belowcaptionskip{-10pt}
    \includegraphics[width=\textwidth]{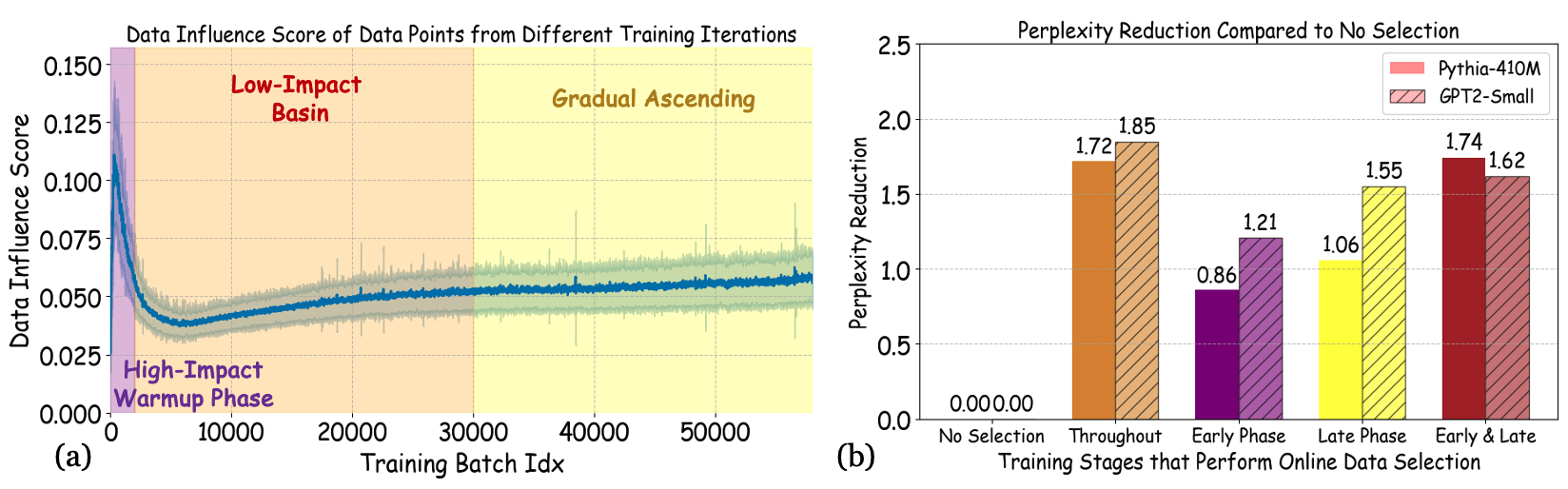}
    \caption{\textbf{(a)} Average data influence scores computed from data value embedding per training batch, measured against the final model's loss on Pile's validation set. Setting: Pythia-410M trained on 1\% of Pile. 
    \textbf{(b)} Comparison of online data selection strategies for training Pythia-410M on Pile. All strategies use gradient cosine similarity to Pile's validation set to select high-quality training batches \citep{fan2024doge}, and only differ in the training stages during which advanced batch selection is applied (random selection otherwise).
    }
    \label{fig:influence-plot-pythia-Pile}
\end{figure}

\section{Trajectory-Specific Leave-one-out Influence}

In this section, we formalize the definition of \emph{trajectory-specific LOO} \rebuttal{which was originally introduced in \citet{hara2019data} as 'SGD-influence'.} Consider a data point $\zstar$ that is included in the training process during the $\start$-th iteration. Let $\batch_t$ denote the training batch at iteration $t$. In standard SGD, the model parameters are updated as $\theta_{t+1} = \theta_t - \eta_t \sum_{z \in \batch_t} \nabla \ell(\theta_t, z)$ for $t = 0, \ldots, T-1$, where $\eta_t$ is the learning rate at iteration $t$. We are interested in the change in the validation loss $\ell(\theta_T, \zval)$ when the data point $\zstar \in \batchtstart$ is removed from iteration $\start$. 
In this counterfactual scenario, the parameter updates proceed as $\theta_{\start+1}' = \theta_{\start} - \eta_{\start} \sum_{z \in \batch_{\start} \setminus \{\zstar\}} \nabla \ell(\theta_{\start}, z)$ and $\theta_{t+1}' = \theta_t' - \eta_t \sum_{z \in \batch_t} \nabla \ell(\theta_t', z)$ for $t = \start+1, \ldots, T-1$. 
\begin{definition}[\textbf{Trajectory-Specific LOO} \rebuttal{\citep{hara2019data}}]
The \emph{trajectory-specific leave-one-out} for data point $\zstar$ at iteration $\start$ with respect to validation point $\zval$ is defined as
$$
\lootstart(\zstar; \zval) := \ell(\theta_T', \zval) - \ell(\theta_T, \zval)
$$
\end{definition}
\rebuttal{\textbf{Discussion.}} $\loo$ quantifies the change in validation loss resulting from removing $\zstar$ during the specific training run determined by the sequence of mini-batches and random initialization. $\loo$ explicitly depends on the timing of when the data is used and models the interaction effects between data points.
For instance, it can show how the introduction of a certain type of example (e.g., a challenging edge case) might amplify or diminish the influence of previously seen, related examples. Moreover, identical data points contributing at different stages of training can receive different value scores. A data point introduced early in training might have a significantly different impact compared to the same point introduced later, as the model state evolves. 
However, traditional methods like influence functions do not capture these temporal dynamics. The influence function is defined as $\infl(\zstar; \zval) := \nabla_\theta \ell(\theta, \zval)^\top \hessian_{\theta}^{-1} \nabla_\theta \ell(\theta, \zstar)$ where $\hessian_{\theta}$ is the Hessian with respect to the full training loss. Because $\infl$ depends solely on the final state of the model, it invariably assigns the same influence value to identical $\zstar$s, regardless of their position in the training sequence.

\rebuttal{\textbf{Related works (extended version in Appendix \ref{appendix:related-work}).} 
Data attribution methods primarily fall into two categories: LOO-based methods and Shapley value-based methods. While Shapley value-based methods \citep{ghorbani2019data} offer elegant theoretical interpretation, they typically require expensive model retraining, which limits their practical applicability. As a result, LOO-based methods such as influence functions \citep{koh2017understanding} have gained more attention due to their computational efficiency. However, many studies have demonstrated that influence functions can be highly unreliable when applied to deep learning models \citep{basu2020influence,bae2022if,epifano2023revisiting}. In this work, we argue that $\loo$ provides a more appropriate attribution framework for deep learning, particularly in the context of foundation models. Various research communities have independently explored Taylor expansion-based technique (Section \ref{sec:method-unrolling}) for approximating $\loo$ for different purposes \citep{hara2019data,zou2021benign, evron2022catastrophic, wu2022power, wu2024many, ding2024understanding}. However, practical adoption has been hindered by computational demands. In this work, we propose a new method that overcomes the computational bottlenecks in approximating $\loo$ for large-scale models.}

\section{Data Value Embedding}
\label{sec:method}

While trajectory-specific LOO offers clear benefits for understanding data influence in modern ML, its computation presents significant challenges. Exact computation is not feasible, as it would require removing a data point from a specific training iteration and re-initiating the entire training process.
To address this challenge, we introduce the concept of \emph{data value embedding}. 

\subsection{Preliminary: Unrolling the Effect of a Training Data Point in SGD}
\label{sec:method-unrolling}

Recall that we denote the final model as $\theta_T$ and the counterfactual model as $\theta_T'$, which is obtained by removing $\zstar$ from $\start$-th training iteration. We introduce an interpolation between $\theta_T$ and $\theta_T'$ by defining 
$
\theta_{\start+1}(\eps) := \theta_{\start} - \eta_{\start} \sum_{z \in \batch_{\start} \setminus \{\zstar\}} \nabla \ell(\theta_{\start}, z) - \eta_{\start} (1-\eps) \nabla \ell(\theta_{\start}, \zstar)
$ and 
$\theta_{k+1}(\eps) = \theta_k(\eps) - \eta_k \sum_{z \in \batch_k} \nabla \ell(\theta_k(\eps), z)$ for subsequent iterations. 
Note that $\theta_T(0) = \theta_T$ and $\theta_T(1) = \theta_T'$. 
Analogous to influence function-based approaches, we approximate the change in validation loss using a first-order Taylor expansion around $\eps = 0$: 
$
\ell(\theta_T', \zval) - \ell(\theta_T, \zval) \approx \nabla \ell(\theta_T, \zval)^\top \left.\frac{\partial \theta_T(\eps)}{\partial \eps}\right|_{\eps=0}
$. 
Interestingly, the derivative $\left.\frac{\partial \theta_T(\eps)}{\partial \eps}\right|_{\eps=0}$ satisfies a recursive relation detailed in Appendix \ref{appendix:unrolling-sgd}, and we can obtain a well-established approximation from the literature: 
\begin{align} \ell(\theta_T', \zval) - \ell(\theta_T, \zval) &\approx \eta_{\start} \nabla \ell(\theta_T, \zval)^\top \left[ \prod_{k=\start+1}^{T-1} (\iden - \eta_k \hessian_k) \right] \nabla \ell(\theta_{\start}, \zstar). 
\label{eq:sgdinf-expression}
\end{align}
where $\hessian_k = \sum_{z \in \batch_k} \nabla^2 \ell(\theta_k, z)$ is the Hessian and $\iden$ is the identity matrix. In data attribution literature, this approximation in (\ref{eq:sgdinf-expression}) first appears in \citet{hara2019data} and has also been utilized in \citet{chen2021hydra} and \citet{bae2024training}. 
It estimates the influence of removing $\zstar$ from the $\start$-th iteration on the validation loss $\ell(\theta_T, \zval)$ at the final iteration. 
The product term $\prod_{k=\start+1}^{T-1} (\iden - \eta_k \hessian_k)$ encapsulates the cumulative effect of the original data point's removal as it propagates through the entire training process. 
Notably, similar product terms appear frequently in related domains, including continual learning and deep learning theory \citep{zou2021benign, evron2022catastrophic, wu2022power, wu2024many, ding2024understanding}.

\subsection{Data Value Embedding}
\label{sec:data-value-embedding}

Building on (\ref{eq:sgdinf-expression}), we extract the test-data-independent components and define \emph{"data value embedding"} for a training point $\zstar \in \batchtstart$ as 
\begin{align}
\embtstart(\zstar) := \eta_{\start} \left[ \prod_{k=\start+1}^{T-1} (\iden - \eta_k \hessian_k) \right] \nabla \ell(\theta_{\start}, \zstar)
\label{eq:embedding}
\end{align}
This embedding encapsulates the cumulative effect of a training point across the entire learning trajectory. By precomputing and storing these data value embeddings during or after the training phase, we enable highly efficient computation of data influence scores. Specifically, for any given test point $\zval$, the influence of a training point $\zstar$ can be quickly determined by simply computing the dot product $\nabla \ell(\theta_T, \zval)^\top \embtstart(\zstar)$. Vector dot products are among the most computationally efficient operations, especially when executed on modern GPU hardware, which is optimized for such parallelized vector operations. Precomputing the data value embeddings eliminates the need for costly retraining or the availability of test data in advance, making the computation of data influence nearly instantaneous. This is particularly advantageous in real-world scenarios such as data marketplaces, where rapid, on-demand data attribution is critical. 

\textbf{Approximation Error Bound.} 
In Appendix \ref{appendix:unroll-error-guarantee}, we derive a new theoretical analysis of the approximation error associated with the unrolled differentiation estimator for non-convex loss functions. We demonstrate that when the learning rate schedule satisfies $\eta_t \in \mO(1/\sqrt{t})$ with the maximum learning rate scaling as $\mO(1/\sqrt{T})$—a common choice in the literature \citep{vaswani2017attention}—the approximation error remains uniformly bounded and is \emph{independent} of the total number of training steps $T$. While the proof relies on certain assumptions to abstract the complexities of real-world implementation, the theoretical result still implies the method's applicability in practical model training.

\begin{figure}[t]
    \centering
    \includegraphics[width=0.75\textwidth]{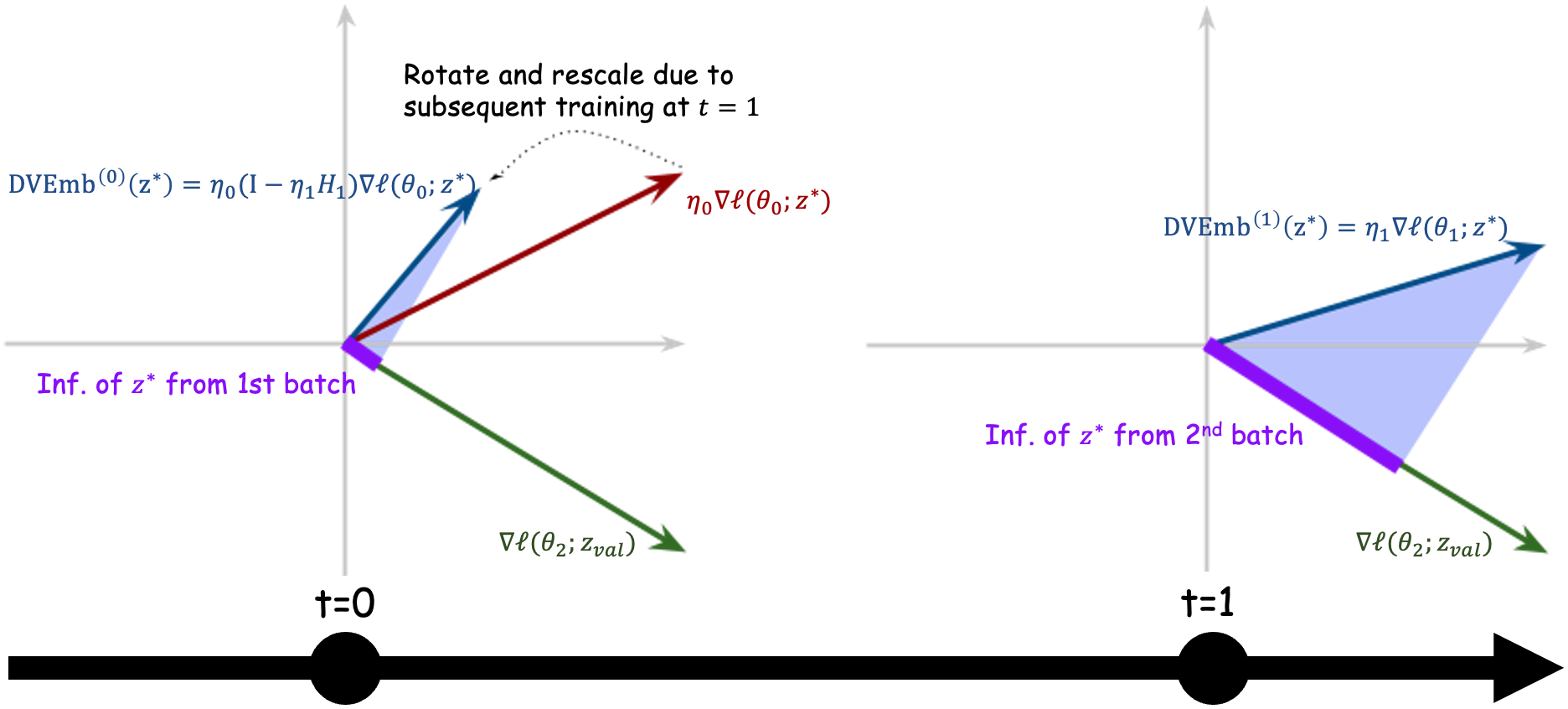}
    \caption{
    An illustrative example of data value embedding for a 2-step training. The influence of a training point $\zstar$ on a test point $\zval$ can be obtained by projecting its data value embedding on $\zval$'s gradient vector at the final checkpoint. 
    }
    \label{fig:geometric-interpretation}
\end{figure}

\section{Efficient Computation and Storage of Data Value Embedding}
\label{sec:efficient-computation}

While the data value embedding approach offers a promising solution for real-time data attribution that incorporates training-specific factors, its practical implementation faces significant computational and storage challenges. The computation of $\emb$ is non-trivial, requiring per-sample gradient calculations and per-step Hessian computations. Moreover, each $\emb_t(\zstar)$ has the same dimensionality as the model parameters, making it infeasible to store individual embeddings for each training data point on the disk. 
To address these challenges, we develop a series of techniques that significantly enhance both the computational and storage efficiency of data value embedding. 

\subsection{Recursive Approximation of Data Value Embedding via Generalized Gauss-Newton Matrix}

We show that data value embedding can be computed recursively, beginning from the final training iteration and working backward, when using the Generalized Gauss-Newton (GGN) approximation for the Hessian matrix. This naturally gives rise to a backward computation algorithm for $\embt$. 

A widely-adopted approximation for the Hessian matrix $\hessian_k$ is the Generalized Gauss-Newton (GGN) approximation $
\hessian_t 
\approx 
\sum_{z \in \batcht} \nabla \ell(\theta_t, z) \nabla \ell(\theta_t, z)^\top
$, particularly in the context of cross-entropy loss \citep{martens2020new}. 
The GGN approximation is extensively used in various machine learning algorithms because it captures the essential curvature information of the loss landscape while remaining computationally feasible. For further details, see Appendix \ref{appendix:GGN-approximation}. 
\rebuttal{Under this approximation to $\hessian_t$, the following shows that we can compute $\embtstart(\zstar)$ for any $\zstar \in \batchtstart$ if the data value embeddings of data points from later training iterations (i.e., $\embt(z)$ for $t \geq \start+1$) is available.}

\begin{theorem}
\label{thm:recursive-expression}
\rebuttal{
Given generalized Gauss-Newton approximation $
\hessian_t 
\approx 
\sum_{z \in \batcht} \nabla \ell(\theta_t, z) \nabla \ell(\theta_t, z)^\top
$, we have}
$$
\embtstart(\zstar) = \eta_{\start} \g \ell(\theta_{\start}, \zstar)
- \eta_{\start} 
\sum_{t=\start+1}^{T-1} 
\left(
\sum_{z \in \batch_{t}} 
\left( \g \ell(\theta_{t}, z)^\top 
\g \ell(\theta_{\start}, \zstar) \right)
\emb^{(t)} (z)
\right)
$$
\end{theorem}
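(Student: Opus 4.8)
The plan is to verify the claimed equation directly by substituting the closed-form definition of the data value embedding into both sides and reducing everything to a single matrix telescoping identity. Write $\mathbf{P}^{(t)} := \prod_{k=t+1}^{T-1}(\iden - \eta_k \hessian_k)$ for the cumulative propagation operator, with the convention $\mathbf{P}^{(T-1)} = \iden$ (empty product), and recall from (\ref{eq:embedding}) that $\embt(z) = \eta_t \mathbf{P}^{(t)} \g\ell(\theta_t, z)$ for every iteration $t$ and every $z \in \batcht$. In particular the left-hand side is exactly $\embtstart(\zstar) = \eta_{\start}\mathbf{P}^{(\start)}\g\ell(\theta_{\start}, \zstar)$, so the goal is to show that the right-hand side equals this same quantity.

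First I would substitute $\embt(z) = \eta_t \mathbf{P}^{(t)} \g\ell(\theta_t,z)$ into the right-hand side and pull the $z$-independent factor $\eta_t\mathbf{P}^{(t)}$ out of the inner sum over $z \in \batch_t$. Since the scalar $\g\ell(\theta_t,z)^\top \g\ell(\theta_\start,\zstar)$ commutes past the vector $\g\ell(\theta_t,z)$, the inner sum collapses via
\[
\sum_{z\in\batch_t}\g\ell(\theta_t,z)\,\big(\g\ell(\theta_t,z)^\top \g\ell(\theta_\start,\zstar)\big) = \Big(\sum_{z\in\batch_t}\g\ell(\theta_t,z)\g\ell(\theta_t,z)^\top\Big)\g\ell(\theta_\start,\zstar) = \hessian_t\,\g\ell(\theta_\start,\zstar),
\]
where the final equality is precisely the GGN approximation $\hessian_t \approx \sum_{z\in\batch_t}\g\ell(\theta_t,z)\g\ell(\theta_t,z)^\top$ assumed in the statement. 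After this step the right-hand side becomes $\eta_{\start}\big[\iden - \sum_{t=\start+1}^{T-1}\eta_t \mathbf{P}^{(t)}\hessian_t\big]\g\ell(\theta_\start,\zstar)$, so the whole claim reduces to the purely algebraic matrix identity $\mathbf{P}^{(\start)} = \iden - \sum_{t=\start+1}^{T-1}\eta_t \mathbf{P}^{(t)}\hessian_t$.

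To establish this identity I would telescope. The definition of $\mathbf{P}^{(t)}$ gives the one-step factorization $\mathbf{P}^{(t-1)} = \mathbf{P}^{(t)}(\iden - \eta_t \hessian_t) = \mathbf{P}^{(t)} - \eta_t \mathbf{P}^{(t)}\hessian_t$, hence $\eta_t \mathbf{P}^{(t)}\hessian_t = \mathbf{P}^{(t)} - \mathbf{P}^{(t-1)}$. Summing over $t = \start+1, \ldots, T-1$ collapses the right-hand side to $\mathbf{P}^{(T-1)} - \mathbf{P}^{(\start)} = \iden - \mathbf{P}^{(\start)}$, which rearranges to exactly the desired identity. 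Equivalently, one can run a backward induction on $\start$ from $T-1$: the base case $\start = T-1$ is immediate since $\mathbf{P}^{(T-1)} = \iden$ and the sum is empty, and the inductive step peels off the single factor $(\iden - \eta_{\start+1}\hessian_{\start+1})$.

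The only real subtlety, and the step I would watch most carefully, is the ordering of the non-commuting matrix product: the recursive Jacobian relation $\partial\theta_{k+1}/\partial\eps = (\iden - \eta_k \hessian_k)\,\partial\theta_k/\partial\eps$ places the latest iteration on the far left of $\mathbf{P}^{(\start)}$, so the factorization $\mathbf{P}^{(t-1)} = \mathbf{P}^{(t)}(\iden - \eta_t \hessian_t)$ peels the factor off the right end; getting this orientation wrong would break the telescoping. It is also worth flagging that the collapse of the inner $z$-sum into $\hessian_t$ is exactly where the GGN structure is essential — with a general Hessian the scalar-weighted sum of per-sample gradients would not reassemble into $\hessian_t\,\g\ell(\theta_\start,\zstar)$, so the recursion is genuinely specific to the Gauss-Newton approximation.
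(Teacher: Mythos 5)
Your proposal is correct and is essentially the paper's own argument: both rest on the same one-step factorization of the cumulative product, $\prod_{k=t}^{T-1}(\iden-\eta_k\hessian_k)=\left[\prod_{k=t+1}^{T-1}(\iden-\eta_k\hessian_k)\right](\iden-\eta_t\hessian_t)$, which peels the earliest factor off the right end, combined with the GGN rank-one collapse that reassembles $\eta_t\left[\prod_{k=t+1}^{T-1}(\iden-\eta_k\hessian_k)\right]\g\ell(\theta_t,z)$ into $\emb^{(t)}(z)$. The only difference is presentational: the paper expands one recursion step explicitly and then asserts the unrolled result (``generalizing the summation over $\batch_{t+1}$ to include all batches''), whereas your reduction to the telescoping identity $\mathbf{P}^{(\start)}=\iden-\sum_{t=\start+1}^{T-1}\eta_t\,\mathbf{P}^{(t)}\hessian_t$, with $\eta_t\,\mathbf{P}^{(t)}\hessian_t=\mathbf{P}^{(t)}-\mathbf{P}^{(t-1)}$, makes that final unrolling step fully explicit — and you correctly pin down the product ordering (latest iteration leftmost) that the paper's notation leaves implicit.
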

The proof is deferred to Appendix \ref{appendix:recursive}. 

\rebuttal{\textbf{Interpretation.} Theorem \ref{thm:recursive-expression} provides crucial insights into the interactions between training data points throughout the model training process. 
When two points $\zstar$ and $z$ are similar, their gradient similarity term $\nabla \ell(\theta_t, z)^\top \nabla \ell(\theta_{\start}, \zstar)$ increases, indicating stronger interaction between these points. To illustrate this phenomenon, consider training a language model where an early data point $\zstar$ contains content about "quantum computing". The influence of $\zstar$ on the final model varies depending on the subsequent training data: if multiple similar "quantum computing" data points appear in later iterations, $\zstar$'s influence on the final model diminishes, as these later examples could teach similar concepts to the model. Conversely, if $\zstar$ remains one of the few "quantum computing" examples throughout training, it maintains a stronger influence on the final model.}

\textbf{Overview of the remaining sections.} 
Theorem \ref{thm:recursive-expression} suggests the possibility of a backpropagation algorithm for computing data value embeddings, contingent on the availability of per-sample gradient vectors for all training data. To make this approach practical for large-scale applications, we address two key challenges in the following sections: (1) Efficient computation and storage of per-sample gradient vectors for all training data (Section \ref{sec:store-gradient}). (2) Efficient computation (Sections \ref{sec:alg-backpropagation}) and parallelization (Section \ref{sec:method-influence-cpts}) of data value embeddings using Theorem \ref{thm:recursive-expression}. Additionally, we discuss practical extensions and considerations for real-world scenarios (Appendix \ref{appendix:practical-consideration}).

\subsection{Step 1: Store Per-Sample Training Gradient Information at Each Iteration}
\label{sec:store-gradient}

During model training, we additionally store the \emph{per-sample} gradient for each data point in the training batch. However, this approach presents significant computational and storage challenges: \textbf{(1) Storage:} Let $p$ denote the number of model parameters. Each gradient vector has dimension $p$, requiring $\mO(TBp)$ disk space, where $B=|\batcht|$ is the batch size. This effectively corresponds to storing millions of model-size vectors. \textbf{(2) Efficiency:} Computing per-sample gradients necessitates separate backpropagation for each $z \in \batcht$, increasing computational cost by a factor of $B$.

\textbf{Avoiding per-sample gradient computation \& full gradient storage (detailed in Appendix \ref{appendix:ghost}).} To mitigate both issues, we leverage a gradient decomposition and take advantage of the computations already performed during backpropagation \citep{wang2024data, choe2024your}. By expressing gradients as the outer product of activations and output derivatives, only a single backpropagation on the aggregated loss is required to compute per-sample gradients, preserving the usual training speed. Additionally, instead of storing the full gradient vectors, we store the decomposed components, potentially reducing the storage requirement to $\mO(TB\sqrt{p})$ for non-sequential data.

\textbf{Random projections for large models.} For large-scale foundation models with billions of parameters, we apply random projections to further compress the stored gradient information. Using projection matrices, we project the activations and output derivatives to a lower-dimensional space. This approach significantly reduces storage needs to $\mO(TB\projdimgrad)$, where $\projdimgrad$ is the projected dimension, while still capturing essential gradient geometric information.

We acknowledge that deriving a theoretical multiplicative guarantee here is challenging, given that the data value embedding itself is a linear combination that could be zero. However, our ablation study in Appendix \ref{appendix:eval-projection-error} demonstrates that our approach is relatively more robust compared to influence functions across different projection dimensions. These results provide strong evidence of the robustness of our method in practice, and we leave the theoretical guarantee as future work.

\subsection{Step 2: Backpropagating Data Value Embedding}
\label{sec:alg-backpropagation}

Having established the method for storing projected gradient vectors, we now proceed to describe the backward computation algorithm for data value embeddings. For ease of presentation, we continue to use full gradient vector notation. However, in practical implementations, we use the projected gradient vectors for efficient storage. That is, $\g_\theta \ell \in \R^{\projdimgrad}$ in the subsequent contents.

According to Theorem \ref{thm:recursive-expression}, an equivalent expression for $\embtstart(\zstar)$ is given by 
$$
\embtstart(\zstar)
= \eta_{\start} \g \ell(\theta_{\start}, \zstar)
- \eta_{\start} 
\g \ell(\theta_{\start}, \zstar) \embM^{(\start)}
$$ where $\embM^{(\start)} := \sum_{t=\start+1}^{T-1} \left( \sum_{z \in \batch_{t}} \left( \emb^{(t)}(z) \nabla \ell(\theta_{t}, z)^\top \right) \right)$. 
At a high level, our algorithm computes $\embtstart(\zstar)$ for each $\start$ from $T-1$ down to $0$, while maintaining a running matrix $\embM^{(\start)} \in \R^{\projdimgrad \times \projdimgrad}$ throughout the backpropagation process for algorithm efficiency.

\textbf{Backward algorithm from the final iteration.} 
We initialize $\embM^{(T-1)} = \mathbf{0}$ as the data value embedding coincides with the training gradient for the last iteration. For $\start = T-1, \ldots, 0$, we recursively compute: \textbf{(1)} The data value embedding for each $\zstar \in \batchtstart$: 
$
\embtstart(\zstar) = 
\eta_{\start} \nabla \ell(\theta_{\start}, \zstar)
- \eta_{\start} \embM^{(\start)} \nabla \ell(\theta_{\start}, \zstar)
$, and \textbf{(2)} Update the weighting matrix after computing all embeddings for the current iteration: 
$
\embM^{(\start-1)} =
\embM^{(\start)} + 
\sum_{\zstar \in \batchtstart} 
\emb^{(\start)}(\zstar) \g \ell(\theta_{\start}, \zstar)^\top
$. 
A detailed algorithm pseudocode can be found in Algorithm \ref{alg:full-backpropagate}.

\textbf{Computing data value embedding on a per-layer basis.} Moreover, by adopting an assumption similar to that in EK-FAC regarding the independence of gradients across different layers, we can compute data value embeddings on a per-layer basis. This approach significantly reduces the computational and memory costs. The assumption of layer-wise independence is common in the literature on influence functions \citep{grosse2023studying}, as it enables tractable analysis and efficient algorithms for deep neural networks. While this approximation neglects cross-layer gradient correlations, it is often justified because intra-layer interactions tend to dominate in practice. Treating layers independently thus strikes a favorable balance between computational feasibility and approximation accuracy.

\textbf{Complexity analysis.} 
\textbf{(1) Computational \& Memory:} The primary computational cost of our algorithm stems from matrix multiplications and additions in updating data value embeddings and the weighting matrix, resulting in $\mO(BT\projdimgrad^2)$ floating-point operations (flops). However, if we compute the data value embedding per layer, flops improve to $\mO(BT\projdimgrad^2/L)$ where $L$ is the number of layers. 
The update of the running matrix $\embM^{(\start)}$ requires $\mO(B
\projdimgrad^2/L^2)$ memory. In comparison, regular model training requires $\mO(BTp)$ flops and $\mO(p)$ memory, where $p$ is the number of model parameters. 
Consequently, Algorithm \ref{alg:full-backpropagate} incurs significantly lower costs compared to regular training. We further note that the influence function method requires computing the per-sample gradient for each training data point on the final model, which is effectively equivalent to one epoch of training. As a result, both the memory requirements and flops for the influence function method are at least equivalent to those of model training, which are much larger than our algorithm's requirements. 
\textbf{(2) Storage:} Each $\embt(\zstar)$ has dimension $\mO(\projdimgrad)$, resulting in a total storage requirement of $\mO(BT\projdimgrad)$ for data value embeddings across all training points. While this can be substantial, disk storage is relatively inexpensive in modern computing environments. Moreover, the reduced dimensionality achieved through projection significantly mitigates the storage burden compared to storing full-dimensional embeddings. A summary of the complexity comparison with the most efficient implementation of the influence function \citep{choe2024your} is provided in Table \ref{tb:complexity} \rebuttal{in Appendix \ref{appendix:method-complexity}.}

\subsection{Parallelized Extension for Influence Embedding Computation (Overview)}
\label{sec:method-influence-cpts}

The backpropagation algorithm introduced in Section \ref{sec:alg-backpropagation} operates with a runtime complexity of $\mO(T)$, as it sequentially computes $\embtstart$ for $\start = T-1, \ldots, 0$. While being significantly more efficient than the influence function, which requires re-computing all training gradients on the final model (see Section \ref{sec:eval-efficiency} and Table \ref{tb:complexity}), it can still be costly for long training periods. 
Here, we introduce \emph{influence checkpointing}, a parallelized extension for Algorithm \ref{alg:full-backpropagate}. 

\textbf{Influence Checkpointing.} We reduce computational costs by allowing concurrent computation of data value embeddings at multiple checkpoints during training. By selecting $K$ evenly spaced training steps, we can efficiently compute data value embeddings for each \emph{intermediate checkpoint} in parallel. By carefully computing and storing necessary results, we can efficiently reconstruct the data value embedding for the final model. This reduces the overall computational cost by $K$ times. The detailed algorithm description, pseudocode, and complexity analysis are deferred to Appendix \ref{appendix:influence-checkpointing}.

\textbf{Data Value Dynamics During Training.} 
In addition to its computational benefits, the influence checkpointing algorithm enables a powerful capability: tracking the evolution of data influences throughout the entire model training process. If the intermediate checkpoints $\theta_{t_1}, \ldots, \theta_{t_{K-1}}$ was saved—a common practice in foundation model pretraining—we can analyze how the influence of a fixed data point changes on different intermediate checkpoints. As a result, we gain a more fine-grained and dynamic view of how the influence of a fixed data point propagates to the subsequent training steps, providing deeper insights into the model's learning behavior over time. This capability opens up new avenues for understanding and optimizing machine learning model training.

\section{Experiments}
\label{sec:eval}

In this section, we evaluate the effectiveness of our proposed data value embedding method. First, we assess its fidelity in accurately reflecting data importance using small-scale experimental setups (Section \ref{sec:eval-fidelity}), as well as its computational efficiency (Section \ref{sec:eval-efficiency}). We then apply data value embedding to analyze the training dynamics during foundation model pretraining (Section \ref{sec:eval-analysis} and \ref{sec:eval-qualitative}). The baselines, implementation details, and additional results are deferred to Appendix \ref{appendix:eval}.

\subsection{Fidelity Evaluation}
\label{sec:eval-fidelity}

To validate the effectiveness of our proposed data value embedding algorithm, we assess its accuracy in approximating $\loo$ scores. \rebuttal{Additionally, in Appendix \ref{appendix:mislabel-detection}, we compare to a variety of data attribution baselines on the standard benchmarks of mislabel data detection and data selection.}

Computing ground-truth LOO requires retraining the model multiple times, each time excluding a single data point while keeping all other training specifics, such as batch order, unchanged. Given the computational intensity, we conduct our experiments on the MNIST \citep{lecun1989handwritten} using a small MLP trained with standard SGD. 
We consider two settings: \textbf{(1) Single epoch removal}, where a data point is excluded from training during a single epoch but still in other training epochs. Here, we remove the data point from the last epoch. \textbf{(2) All-epoch removal}, where a data point is excluded in all epochs. In this case, the approximation provided by data value embedding is obtained by summing the data value embeddings of the data point from all epochs, as discussed in Appendix \ref{appendix:practical-consideration}.

Figure \ref{fig:groundtruth-LOO-comparison} shows that data value embedding has a high Spearman correlation with the ground-truth LOO. This superior performance is consistent across both settings. 
We note that the influence function scores remain constant for both settings, as influence functions do not account for specific training runs and cannot differentiate between single- and multi-epoch removals. Moreover, influence function exhibits a very weak correlation with LOO, a phenomenon that has been reported in many literature \citep{sogaard2021revisiting,basu2020influence,bae2022if,epifano2023revisiting}. 

\begin{figure}[t]
    \centering
    \setlength\intextsep{0pt}
    \setlength\abovecaptionskip{0pt}
    \includegraphics[width=\textwidth]{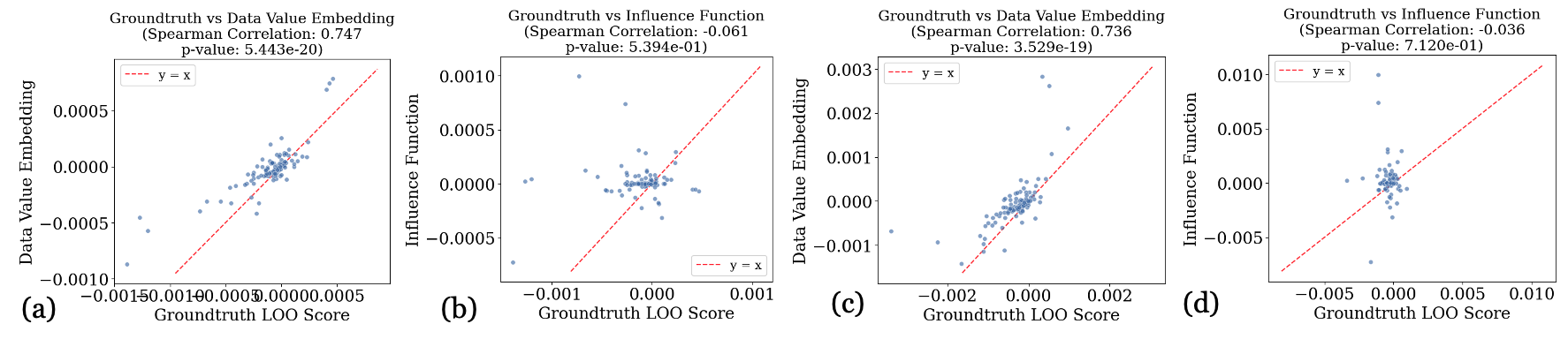}
    \caption{
    The correlation between ground-truth LOO when the MLP is trained for 3 epochs and the estimation obtained by (a) the data value embedding method and (b) the influence function for \emph{single epoch removal}. (c) and (d) present the corresponding correlations for \emph{all-epoch removal}. 
    Additional results for models being trained for a longer time can be found in Appendix \ref{appendix:eval-fidality}. 
    }
    \label{fig:groundtruth-LOO-comparison}
\end{figure}

\subsection{Computational Efficiency}
\label{sec:eval-efficiency}

In this section, we compare the storage, memory, and computational efficiency of data value embedding with LoGRA \citep{choe2024your}, the most efficient implementation of the influence function so far. LoGRA first computes per-sample training gradients on the final model for \emph{all} training data points $\zstar \in \dataset$, where $\dataset$ represents the dataset. Like our algorithm, LoGRA also uses random projection and stores the \emph{projected} Hessian-adjusted gradient $\hessian_T^{-1} \g \ell(\theta_T, \zstar)$ to the disk, and the influence function can be computed via dot-product with test data gradient.

Table \ref{tb:efficiency} shows the result of computing data influence for Pythia-410M trained on 1\% of the Pile dataset. Both algorithms first compute and store Hessian-adjusted gradients/data value embedding, and then compute the data influence with respect to any given test point. 
As we can see, LoGRA and data value embedding have similar disk storage requirements, as both approaches save vectors of dimension $\projdimgrad$ for each data point. For peak GPU memory in the storage step, LoGRA requires recomputing gradients for all training data on the final model $\theta_T$, which is effectively equivalent to one epoch of model training. In contrast, the data value embedding computation algorithm operates only on projected vectors, which takes much less GPU memory (0.84 vs 63.6GB). Consequently, the computational efficiency for computing data value embeddings is also much higher (over $15 \times$ faster). 
When computing data influence, since both approaches simply take the dot product between test data's (projected) gradient and $\hessian_T^{-1} \g \ell(\theta_T, \zstar)$ or $\embt(\zstar)$ or data value embedding, the GPU memory usage and efficiency are the same.

\begin{table}[h]
\centering
\setlength\intextsep{0pt}
\setlength\abovecaptionskip{2pt}
\resizebox{\columnwidth}{!}{\begin{tabular}{@{}cccccc@{}}
\toprule
\textbf{}                     & \multicolumn{3}{c}{\textbf{Storing $\hessian_T^{-1} \g \ell(\theta_T, \zstar)$ / data value embedding}}         & \multicolumn{2}{c}{\textbf{Compute Influence (dot-product)}} \\ \midrule
\textbf{}                     & \textbf{Storage} & \textbf{Peak GPU Mem.} & \textbf{Throughput} & \textbf{Peak GPU Mem.}      & \textbf{Throughput}     \\
\textbf{LoGRA}                & 170GB            & 63.6GB          & 41.6                & 16.31GB              & 640                     \\
\textbf{Data Value Embedding} & 171GB            & 64.6GB / 0.84GB* & 667.52              & 16.31GB              & 640                     \\ \bottomrule
\end{tabular}}
\caption{
Memory and compute efficiency analysis for LoGRA \citep{choe2024your} and data value embedding. Throughput is measured as the number of data points per second for storing and influence computation. The experiment is conducted on one A100 GPU with 80GB VRAM. 
The projection dimension is set to 1024.
*Since data value embedding technique contains two different steps in storing relevant information for data attribution (storing gradient during training \& compute and store data value embedding after training), we include the peak GPU memory usage for both steps.
}
\label{tb:efficiency}
\end{table}

\subsection{Analyzing Training Dynamics of Foundation Models}
\label{sec:eval-analysis}

In this section, we showcase data value embedding as a powerful tool for analyzing the training dynamics of foundation model pretraining with Pythia-410M trained on 1\% of Pile dataset as an example. 
Results for additional datasets/models and the analysis for fine-tuning are in Appendix \ref{appendix:eval-dynamics}.

\textbf{Value of training data from different stages in LLM pretraining.} We first visualize the distribution of data influence scores on the final model across different training batches. 
For a fair comparison, we normalize the influence scores for each batch by their learning rate. 
Figure \ref{fig:influence-plot-pythia-Pile} (a) illustrates the results for training Pythia-410M on the Pile dataset. 
As we can see, the data influence on the final model can be categorized into three distinct regimes: 
\textbf{(1) High-impact Warmup Phase:} This phase occurs during the very early training stage and is characterized by exceptionally high data influence scores. It corresponds to a brief window at the onset of training where the loss reduces rapidly. 
\textbf{(2) Low-impact Basin:} This regime spans the early-to-middle training stage, where data influence scores are significantly lower. This period coincides with a slowdown in the rate of loss decrease, transitioning into a phase of relative stability.
\textbf{(3) Gradual Ascent:} In this phase, we observe that the later a data point participates in the training, the higher its influence score becomes.

\textbf{Explanation:} 
\textbf{(1) Parameter initialization and warmup training are important for final model performance.} 
During the very early stages of training, the gradient norms are large, which leads to significant parameter updates. Furthermore, the subsequent gradients' magnitude \emph{decrease rapidly}, causing data points from the High-impact Warmup Phase to maintain substantial influence throughout the training process, even as their immediate impact diminishes over time. 
\begin{wrapfigure}{r}{0.47\textwidth}
    \centering
    \setlength\intextsep{0pt}
    \setlength\abovecaptionskip{0pt}
    \setlength\belowcaptionskip{-10pt}
    \includegraphics[width=0.48\textwidth]{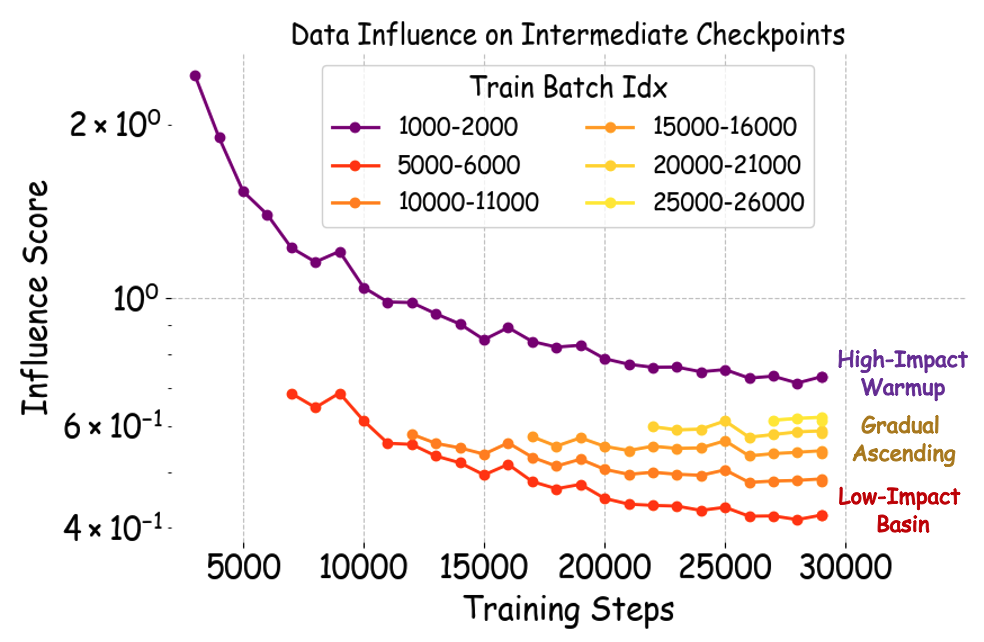}
    \caption{\rebuttal{Evolution of influence scores across training checkpoints. The x-axis shows training iterations, and the y-axis shows the average influence of training examples on each checkpoint. Examples are grouped according to the iterations they are being trained on.}}
    \label{fig:influence-saturation}
\end{wrapfigure}
Figure \ref{fig:influence-saturation} visualizes this phenomenon. The \rebuttal{purple curve} shows that training data points from the High-impact Warmup Phase, while experiencing large drops in influence as training progresses, still maintain higher influence than later data points. This observation aligns with the well-known effect that model initialization and/or warm-up training plays a crucial role in training performance \rebuttal{\citep{he2015delving, hanin2018start}}, effectively initializing model parameters and gradually preparing the model for more complex learning tasks.
\textbf{(2) Influence saturation from future data.} As training progresses into a smoother loss regime, the gradient norms become relatively stable and decrease slowly. This makes the influence decay from subsequent training much more significant for these data points compared to those from the High-Impact Warmup Phase. Since earlier data points experience more future training iterations, their influence decreases more over time. The \rebuttal{red curve} in Figure \ref{fig:influence-saturation} demonstrates this trend, showing influence scores for these points gradually decreasing during training and eventually falling below those of later training data points. One might initially think this phenomenon is connected to catastrophic forgetting, where the model appears to "forget" the influence of data from earlier training phases as it progresses. However, we note that a data point's influence score decreases the most when future data points are similar to it, which is different from catastrophic forgetting. Intuitively, if future points are identical, the presence of the earlier data point in training becomes less relevant to the model's behavior. A more detailed explanation is deferred to Appendix \ref{appendix:eval-dynamics}.

\textbf{Implications for data selection strategies.} 
These observations suggest that for pretraining, data selection is most critical during the very early and later stages of training. To validate this insight, we train Pythia-410M on Pile with different online data selection strategies, as shown in Figure \ref{fig:influence-plot-pythia-Pile} (b). \rebuttal{Specifically, we use an online data selection strategy (adapted from \citet{fan2024doge}) that forms each training batch by selecting data points whose gradients align well with those from a validation batch sampled from Pile (see Appendix \ref{appendix:eval-dataselection-detail} for details). This selection process requires computing gradient similarities, introducing significant overhead at each iteration where it is applied. Therefore, identifying the most critical training phases for applying this selection process becomes crucial for computational efficiency.} 
Remarkably, Figure \ref{fig:influence-plot-pythia-Pile} (b) demonstrates that the performance of a strategy where we only perform data selection in the first 2000 iterations and after 20000 iterations closely matches the performance when data selection is performed in all iterations. Moreover, it reduces computational costs by more than 5 times. 
This corroborates our practical insights for designing efficient data selection strategies in LLM pretraining: by focusing data selection efforts on the critical early and late stages of training, we can potentially achieve optimal model performance while significantly reducing computational overhead. 

\subsection{Qualitative Evaluation}
\label{sec:eval-qualitative}

We conduct a qualitative analysis to examine the similarities between a test data point $\zval$ and the most valuable data points identified by data value embedding. In this experiment, we set $\zval$ to be identical to one of the training data points, making the most similar data point its own repetition. In data valuation literature, the influence score of a training point on its repetition is usually referred to as "self-influence" \citep{koh2017understanding} and is being used to measure memorization \citep{feldman2020neural}. Intuitively, the self-influence should be the highest among all training points.

\definecolor{lightgray}{gray}{0.9}
\definecolor{lightorange}{rgb}{1, 0.85, 0.6}

Figure \ref{fig:corpus-similarity} shows representative results from training GPT-2 on Wikitext-103 over three epochs, where the test data is about \colorbox{green}{military video game}. 
As observed, for model checkpoints after the 2nd and 3rd epochs, the \colorbox{green}{test data point's repetition} achieves the highest influence score, as expected. 
However, for the model checkpoint after the 1st epoch, the most valuable data points are not the repetition but rather a similar data about \colorbox{lightorange}{war history}. This discrepancy occurs because, during the first epoch of training, the repetition of the test data point resides in the \emph{low-value basin} identified in Section \ref{sec:eval-analysis}, resulting in a lower self-influence score as subsequent training progresses. 
Additionally, we observe that the influence function may incorrectly identify irrelevant data points as highly influential (e.g., the \colorbox{lightgray}{Popular Music} completely irrelevant to \colorbox{green}{military video game} but being identified as the second most valuable data), possibly due to its bias towards data points with high gradient norms, as also noted in \citet{barshan2020relatif}. This limitation underscores the advantages of data value embedding in providing more accurate and context-aware data influence assessment.
\begin{figure}[h]
    \centering
    \setlength\intextsep{0pt}
    \setlength\abovecaptionskip{0pt}
    \includegraphics[width=0.99\textwidth]{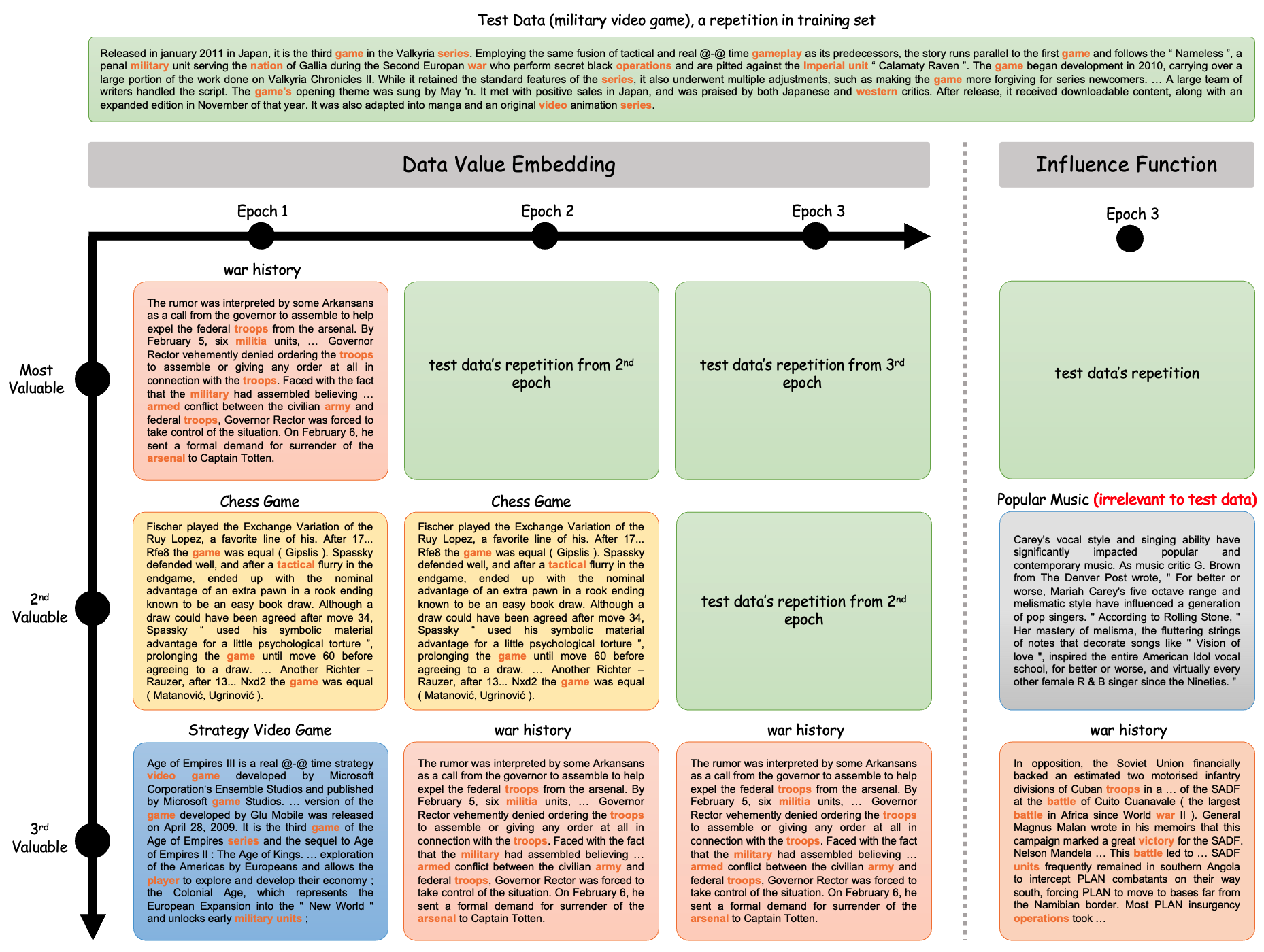}
    \caption{
    Visualization of (left) the evolution of the top-3 most valuable training data points identified by data value embedding throughout 3 training epochs and (right) the top-3 most valuable training data points identified by influence function. We use GPT-2 trained on Wikitext-103, with the test point being a repetition of a training data point related to a military video game. The common words between the test and training data are highlighted in \colorbox{orange}{orange}.
    }
    \label{fig:corpus-similarity}
\end{figure}

\section{Conclusion and Limitations}

In this paper, we introduced Data Value Embedding, a novel approach to data attribution tailored for foundation models. Our method addresses critical limitations of existing techniques by capturing the temporal dynamics of training and enabling real-time attribution without the need for model retraining. The experiments demonstrate the efficacy of data value embedding in providing accurate and efficient data influence scores and unveiling unique insights into the training dynamics of foundation models. 

\textbf{Limitations: SGD as a proxy for Adam.} The data value embedding in (\ref{eq:embedding}) is specifically tailored for SGD. It is not directly extendable to other popular optimizers like Adam due to their normalization terms. Nonetheless, using SGD as a proxy for Adam allows for efficient data influence estimation, which is the approach that is usually adopted in practice and has proved to be effective in our experiment, providing a practical and effective solution for the current scope of our work. While using as a proxy for Adam has proved to be effective in our experiment, extending data value embedding to Adam and other optimizers remains an exciting direction for future research.

\textbf{Potential future work: training curriculum design.} Our findings on the varying influence of data points across training stages suggest the potential for designing optimal training curricula. Future work could explore leveraging data value embedding to design curricula that maximize learning efficiency. This could involve dynamically adjusting the presentation order and frequency of data points based on their predicted influence at different training stages.

\clearpage

\section*{Acknowledgment}

This work is supported in part by the National Science Foundation under grants IIS-2312794, IIS-2313130, OAC-2239622, CNS-2131938, CNS-2424127, Amazon-Virginia Tech Initiative in Efficient and Robust Machine Learning, the Commonwealth Cyber Initiative, Cisco, OpenAI and Google.

We thank Tong Wu, Meng Ding, and Weida Li for their helpful feedback on the preliminary version of this work.


\bibliography{ref}
\bibliographystyle{iclr2025_conference}

\clearpage

\appendix

\section{Extended Related Works}
\label{appendix:related-work}
\subsection{LOO Influence vs LOOCV}

\rebuttal{It is important to distinguish our LOO influence measure from traditional Leave-One-Out Cross-Validation (LOOCV) \citep{ziegel2003elements}. While both involve removing individual data points, they serve different purposes and yield different interpretations. LOOCV is a model evaluation technique that estimates generalization performance by averaging prediction errors on held-out examples, where smaller errors indicate better model performance. In contrast, LOO influence measures how removing a specific training point affects the model's behavior on validation data, quantifying each training example's importance to the learning process. While LOOCV requires training $N$ separate models to evaluate generalization (where $N$ is the dataset size), LOO influence focuses on understanding the counterfactual impact of individual training points on model behavior. This distinction is crucial as we aim to understand data importance rather than model performance.}

\subsection{Influence Function and Friends}
Influence function \citep{koh2017understanding} has emerged as an important tool for interpreting and analyzing machine learning models. As the influence function requires computing the Hessian inverse, many subsequent works are focusing on improving the scalability of the influence function for large-scale models \citep{guo2021fastif,schioppa2022scaling, grosse2023studying}. More recently, \cite{kwon2023datainf} developed an efficient influence function approximation algorithm that is suitable for LoRA fine-tuning, and \cite{zhang2024timeinf} extends the influence function to time-series datasets. 
In a similar spirit to us, \cite{chen2020multi} a multi-stage extension of influence function to trace a fine-tuned model’s behavior back to the pretraining data. However, they changed the original loss function and added a regularization term to account for intermediate checkpoints. 
The most closely related to our work is \citep{choe2024your}. Similar to us, they also make use of the low-rank gradient decomposition and random projection to enable efficient computation and storage of per-sample gradient. However, their approach still requires computing per-sample gradient vectors for all training data on the final model checkpoint, which is effectively equivalent to one model retraining and takes a significantly longer time than data value embedding.

Besides influence function-based approaches, \cite{park2023trak} proposed \emph{TRAK}, which assumes the model linearity and derived a closed-form expression by analyzing one Newton-step on the optimal model parameter on the leave-one-out dataset for logistic regression. However, it generally requires aggregating the estimator across multiple trained models for a reasonable performance and thus difficult to scale to large-scale models. Another closely related literature to this work is TracIN \citep{pruthi2020estimating}, which estimates the influence of each training data by exploiting the gradient over all iterations.

\subsection{Data Shapley and Friends}
\label{appendix:related-work-shapley}

\emph{Data Shapley} is one of the first principled approaches to data attribution being proposed \cite{ghorbani2019data, jia2019towards}. Data Shapley is based on the famous \emph{Shapley value} \citep{shapley1953value}. Since its introduction in 2019 \citep{ghorbani2019data, jia2019towards}, Data Shapley has rapidly gained popularity as a principled solution for data attribution. Due to the computationally expensive nature of retraining-based Data Shapley, various Monte Carlo-based approximation algorithms have been developed \citep{jia2019towards, illes2019estimation, okhrati2021multilinear, burgess2021approximating, mitchell2022sampling, lin2022measuring, wang2023notegroup, li2023faster, covert2024stochastic}, these methods still necessitate extensive computational resources due to repeated model retraining, which is clearly impractical for modern-sized ML models. 
Many of its variants have been proposed. 
\cite{kwon2022beta} argues that the efficiency axiom is not necessary for many machine learning applications, and the framework of \emph{semivalue} is derived by relaxing the efficiency axiom. \cite{lin2022measuring} provide an alternative justification for semivalue based on causal inference and randomized experiments. 
Based on the framework of semivalue, \cite{kwon2022beta} propose \emph{Beta Shapley}, which is a collection of semivalues that enjoy certain mathematical convenience. \cite{wang2023data} propose \emph{Data Banzhaf}, and show that the Banzhaf value, another famous solution concept from cooperative game theory, achieves more stable valuation results under stochastic learning algorithms. 
\cite{li2024robust} further improves the valuation stability by considering value notions outside the scope of semivalue. 
The classic leave-one-out error is also a semivalue, where the \emph{influence function} \cite{cook1980characterizations, koh2017understanding,  grosse2023studying} is generally considered as its approximation. 
Another line of works focuses on improving the computational efficiency of Data Shapley by considering K nearest neighbor (KNN) as the surrogate learning algorithm for the original, potentially complicated deep learning models \citep{jia2019efficient, wang2023noteknn,wang2023threshold, wang2024efficient, yang2024inflation}. 
\cite{ghorbani2020distributional, kwon2021efficient, li2023faster} consider Distributional Shapley, a generalization of Data Shapley to data distribution. 
Finally, \citet{wang2024data} proposes In-Run Data Shapley, a scalable alternative to the original Data Shapley that avoids the need for repeated retraining. However, a critical limitation of In-Run Data Shapley is its requirement of knowing validation data in advance, as we detailed in Appendix \ref{appendix:limitations}.

\subsection{Alternative Data Attribution Methods}
\label{appendix:related-work-nonshapley}

There have also been many approaches for data attribution that do not belong to the family of influence function or the Shapley value. For a detailed survey, we direct readers to \cite{sim2022data} and \cite{jiang2023opendataval}. 
Here, we summarize a few representative works. 
Datamodel \citep{ilyas2022datamodels} is similar to retraining-based Data Shapley that requires training thousands of models on different data subsets to estimate the data influence of each training datum. It leverages a linear regression model to predict the model performance based on the input training set, and uses the learned regression coefficient as the measure of data influence. \cite{xu2021validation} proposed a diversity measure known as robust volume (RV) for appraising data sources. \cite{tay2022incentivizing} devised a valuation method leveraging the maximum mean discrepancy (MMD) between the data source and the actual data distribution. \cite{nohyun2022data} introduced a \emph{complexity-gap score} for evaluating data value without training, specifically in the context of overparameterized neural networks.
\cite{wu2022davinz} applied a domain-aware generalization bound based on neural tangent kernel (NTK) theory for data valuation. \cite{amiri2022fundamentals} assessed data value by measuring statistical differences between the source data and a baseline dataset. 
\cite{just2022lava} utilized a specialized Wasserstein distance between training and validation sets as the utility function, alongside an efficient approximation of the LOO error. \cite{kwon2023data} utilized random forests as proxy models to propose an efficient, validation-free data valuation algorithm. \citet{nguyen2024bayesian} takes a Bayesian view of data attribution and is able to evaluate the variance of LOO. 
Similarity-based data attribution techniques evaluate the contribution of individual data points \citep{yang2023gmvaluator} by measuring their resemblance to other points in the dataset or model outputs. However, while being highly scalable, these works often lack a formal theoretical justification as influence function or Data Shapley-based approaches.

\clearpage

\clearpage

\section{Limitations of the Existing Data Attribution Techniques for Foundation Models}
\label{appendix:limitations}

\subsection{Influence Function}

Influence functions \citep{cook1980characterizations, koh2017understanding} are a classical technique from robust statistics, adapted for machine learning to measure how the removal of a single data point affects the performance of a trained model. Influence functions quantify the sensitivity of a model's predictions to specific data points, offering insights into the importance of individual training samples. In the machine learning framework, they are particularly useful for diagnosing model behavior, understanding dataset quality, and identifying mislabeled or harmful data points. The core idea of influence functions is to approximate the effect of removing a data point from the training set without needing to retrain the model. Instead of actually excluding a point and retraining, influence functions leverage the model’s final parameters and compute the impact of a point's removal based on the gradient and Hessian inverse of the loss function at the final model state. 
Formally, the influence of a training data point $ z_i $ on the loss $ \ell(\theta, z^{(val)}) $ at a validation point $ z^{(val)} $ is defined as:
\begin{align*}
\infl(z_i) := -\nabla_\theta \ell(\theta, z^{(val)})^\top \hessian^{-1} \nabla_\theta \ell(\theta, z_i)
\end{align*}
where $\theta$ is the final model parameter after training, $\hessian = \frac{1}{N} \sum_{i=1}^N \nabla_\theta^2 \ell(\theta, z_i)$ is the Hessian of the total training loss at $\theta $, $ \nabla_\theta \ell(\theta, z^{(val)})$ and $\nabla_\theta \ell(\theta, z_i)$ are the gradients of the loss at the validation point and the training point, respectively.

\textbf{Limitation: Neglecting Training Phases and Unrealiable Approximation to LOO.} 
A key limitation of influence function techniques is their exclusive focus on the final model parameters, thereby ignoring the intermediate dynamics of the training process. By assessing data contributions solely based on the final trained model, influence functions fail to capture how each data point influenced the model's updates throughout training. This narrow focus introduces inaccuracies, as it overlooks the cumulative effects of model fluctuations during the training iterations. 
Consequently, influence functions can be less accurate in evaluating data contributions, particularly in large-scale models where the training process plays a significant role. For instance, in modern training paradigms for large language models (LLMs), models are typically pretrained on a broad corpus and subsequently fine-tuned on specialized domains. Influence functions, however, cannot differentiate between the impacts of data points during pretraining and fine-tuning phases. Relying solely on the final model parameters after fine-tuning, they miss how pretraining data contributed to learning general language structures or how fine-tuning data adapted the model to specific domains. This inability to account for different training stages results in incomplete and often noisy estimates of data contributions, thereby reducing the precision of attribution in multi-stage training processes. 

Moreover, our analysis in Section \ref{appendix:inf-approximate-LOO} demonstrates that the influence function approximates the expected data influence across different training trajectories only under overly simplistic conditions, which are often violated in practice. These conditions, such as assuming identical intermediate model checkpoints and Hessian matrices, almost never hold in real-world training scenarios where model evolve significantly. This highlights the inadequacy of influence functions in accurately capturing data contributions, underscoring the necessity for more comprehensive data attribution methods that consider the entire training trajectory.

\textbf{Neglecting Training Phases Necessitates Unreasonable Assumptions and Often Require Model Retraining.} 
Additionally, the focus on the final model necessitates assumptions of convergence and strong convexity to ensure reliable results. In many real-world settings, where models are non-convex and may not fully converge, these assumptions are often violated, leading to further inaccuracies in the data contribution estimates. 
As the influence function score is often found to be highly noisy in practice \citep{basu2020influence,sogaard2021revisiting,bae2022if,epifano2023revisiting}, it typically necessitates multiple model retraining to produce reasonable results \citep{deng2024efficient}, which can undermine their original computational efficiency advantage.

\subsection{In-Run Data Shapley}

In-Run Data Shapley \citep{wang2024data} is a data attribution technique designed to evaluate the contribution of individual data points during a single training run of machine learning models. It builds on the traditional Data Shapley framework, which stems from cooperative game theory. The Shapley value, originally proposed by Lloyd Shapley in 1953, distributes total utility fairly among all contributing players based on their marginal contributions. Applying this concept to machine learning, Data Shapley attributes the contribution of each data point in a training dataset by assessing its influence on model performance. However, standard Data Shapley methods face limitations in scalability because they require numerous retraining iterations on different data subsets. These computational demands make them impractical for large-scale models such as foundation models. 
To address these challenges, In-Run Data Shapley was introduced as a scalable alternative that avoids the need for repeated retraining. Instead, it leverages the iterative nature of model training, specifically neural networks, where parameters are updated in small increments. By tracking gradient updates at each training step, In-Run Data Shapley calculates the contribution of individual data points toward the final model without retraining. It approximates the Shapley value using local utility functions tied to specific gradient updates and extends these to the full training process, capturing cumulative contributions. This method reduces the computational overhead to a level comparable with standard training runs while maintaining the theoretical fairness and interpretability of Shapley values.

\textbf{Limitation: Requirement of Validation Data in Advance.} 
One of the key limitations of In-Run Data Shapley is its reliance on the availability of validation data prior to the start of training. The technique calculates data contribution by examining the impact of training points on model performance as measured against the validation set. Thus, access to this validation data throughout the training process is necessary to compute meaningful Shapley values at each iteration. This restriction can limit the applicability of In-Run Data Shapley in scenarios where validation data is not immediately available, such as in certain real-time learning environments or when the validation set is defined only after training. Potential workarounds, such as saving intermediate model checkpoints to calculate contributions post-training, add complexity to the process and might be unreliable.

\subsection{Similarity-Based Techniques}

Similarity-based data attribution techniques evaluate the contribution of individual data points \citep{yang2023gmvaluator} by measuring their resemblance to other points in the dataset or model outputs. These methods typically calculate distances or similarities between data points using metrics such as Euclidean distance, cosine similarity or learned perceptual features. These methods are computationally efficient compared to more complex attribution approaches like Shapley values or influence functions. Since they do not rely on model retraining or gradient-based analyses, similarity-based techniques can quickly estimate data contribution, making them useful in large-scale datasets or models where computational resources are a concern. 

\textbf{Limitation: Lack of Formal Theoretical Justification.} 
While similarity-based techniques offer computational advantages, they lack the formal theoretical guarantees provided by methods such as Shapley values or influence functions. These techniques assume that closeness in feature space directly correlates with data contribution, which is not always true, particularly in high-dimensional spaces where distance metrics may not reflect meaningful relationships. Furthermore, these approaches often fail to account for the complex, non-linear interactions between data points and the model’s learning process, resulting in potentially biased or incomplete attributions. Without a formal grounding in cooperative game theory or model-based influence estimation, the results of similarity-based techniques are more heuristic and may not hold across different models or datasets. 
Additionally, because similarity metrics can be sensitive to the chosen feature representation or distance measure, the results can vary significantly depending on these choices. This lack of robustness limits their reliability in critical applications where precise data attribution is required.

\clearpage

\section{Algorithm Details}
\label{appendix:algorithm-details}

\subsection{Derivation Details for Section \ref{sec:method-unrolling}}
\label{appendix:unrolling-sgd}

Suppose $\zstar$ is a data point that participates in the training during the \emph{first} iteration. Denote $\batch_t$ as the training batch in the $t$-th iteration. 
For standard Stochastic Gradient Descent (SGD), we have:
\begin{equation}
    \theta_{k+1} = \theta_k - \eta_k \sum_{z \in \batch_k} \nabla \ell(\theta_k, z)
\end{equation}
for $k = 0, \ldots, T-1$, where $\eta_k$ is the learning rate at iteration $k$.

For validation data $\zval$, we aim to estimate the change in $\ell(\theta_T, \zval)$ by removing $\zstar$ from the first iteration. 
Specifically, we want to estimate $\ell(\theta_T', \zval) - \ell(\theta_T, \zval)$ where:
\begin{equation}
    \theta_{1}' = \theta_0 - \eta_0 \sum_{z \in \batch_0 \setminus \{\zstar\}} \nabla \ell(\theta_0, z)
\end{equation}
and
\begin{equation}
    \theta_{k+1}' = \theta_k' - \eta_k \sum_{z \in \batch_k} \nabla \ell(\theta_k', z)
\end{equation}
for $k = 1, \ldots, T-1$.

To approach this problem, we define an interpolation between $\theta_T$ and $\theta_T'$:
\begin{equation}
    \theta_{1}(\eps) := \theta_0 - \eta_0 \sum_{z \in \batch_0 \setminus \{\zstar\}} \nabla \ell(\theta_0, z) - \eta_0 (1-\eps) \nabla \ell(\theta_0, \zstar)
\end{equation}
where $\theta_{T}(\eps)$ is defined accordingly. Note that $\theta_T(0) = \theta_T$ and $\theta_T(1) = \theta_T'$.

By taking the first-order Taylor expansion at $\eps=0$, we have:
\begin{align}
    \ell(\theta_T', \zval) - \ell(\theta_T, \zval) &= 
    \ell(\theta_T(1), \zval) - \ell(\theta_T(0), \zval) \nonumber \\
    &\approx 
    \left.\frac{\partial}{\partial \eps} \ell(\theta_T(\eps), \zval)\right|_{\eps=0} \nonumber \\
    &= \nabla \ell(\theta_T, \zval)^\top \left.\frac{\partial \theta_T(\eps)}{\partial \eps}\right|_{\eps=0}
\end{align}

Now, we derive $\left.\frac{\partial \theta_T(\eps)}{\partial \eps}\right|_{\eps=0}$ by observing the following recursive relation for all $k \geq 1$:
\begin{align}
    \frac{\partial \theta_{k+1}(\eps)}{\partial \eps} 
    &=
    \frac{\partial \theta_k(\eps)}{\partial \eps} -
    \eta_k \sum_{z \in \batch_k} \nabla^2 \ell(\theta_k(\eps), z) \frac{\partial \theta_k(\eps)}{\partial \eps} \\
    &= 
    \frac{\partial \theta_k(\eps)}{\partial \eps} 
    \left(
    \iden - \eta_k \hessian_k(\eps) 
    \right)
\end{align}
where $\hessian_k(\eps) = \sum_{z \in \batch_k} \nabla^2 \ell(\theta_k(\eps), z)$ is the Hessian and $\iden$ is the identity matrix. 
Additionally, for the first iteration where $\zstar$ participates, we have
\begin{equation}
    \frac{\partial \theta_{1}(\eps)}{\partial \eps} = \eta_0 \nabla \ell(\theta_0, \zstar)
\end{equation}

Expanding the recursion and substituting it back into our original expression, we get:
\begin{align*}
\ell(\theta_T(1), \zval) - \ell(\theta_T(0), \zval) 
&\approx
\frac{\del}{\del \eps} \ell(\theta_T(\eps), \zval)|_{\eps=0} \\
&= \eta_0 \nabla \ell(\theta_T, \zval)^\top \underbrace{\left[ \prod_{k=1}^{T-1} (\iden - \eta_k \hessian_k) \right]}_{\text{cumulative effect}} \nabla \ell(\theta_0, \zstar)
\end{align*}

This final expression gives an estimate of the influence of removing $\zstar$ from the first iteration on the loss on $\zval$ at the final iteration. The term $\prod_{k=1}^{T-1} (\iden - \eta_k \hessian_k)$ represents the \emph{cumulative effect of all training iterations on the initial influence.} This product captures how the impact of the initial change propagates through the entire training process, accounting for the learning rate and the training data at each subsequent step.

\clearpage

\subsection{Error Guarantee for Unrolling-based Approach}
\label{appendix:unroll-error-guarantee}

In this section, we derive the approximation error guarantee of the unrolling differentiation estimator 
\begin{align*} \unrollest := \left.\frac{\partial \theta_T(\eps)}{\partial \eps}\right|{\eps=0} = \eta{\start} \left[ \prod_{k=\start+1}^{T-1} (\iden - \eta_k \hessian_k) \right] \nabla \ell(\theta_{\start}, \zstar), \end{align*} 
for non-convex loss functions. A very loose bound for $\norm{\theta_T - \theta_T' - \unrollest}$ has been derived in \citet{hara2019data}. Here, we improve the error bound by additionally considering the decay of the learning rate and the spectral norm of Hessian matrices as training progresses. Notably, we establish a uniform bound on the gap.

Assume that $\ell(z; \theta)$ is twice differentiable with respect to the parameter $\theta$, and we train the model for $T$ iterations. We make the following assumptions:

\begin{enumerate}
    \item \textbf{Learning Rate Schedule:} The learning rate $\eta_t$ at iteration $t$ follows the schedule $\eta_t = \frac{\eta_{\max}}{\sqrt{t}}$ where $\eta_{\max} = \frac{C}{\sqrt{T}}$ for some constant $C$. 
    \textbf{Justification:} The decaying learning rate schedule $\eta_t = \frac{\eta_{\max}}{\sqrt{t}}$ is a common choice in neural network training in famous literature \citep{vaswani2017attention}. This schedule allows for larger step sizes during the initial phases of training, facilitating rapid convergence, while gradually reducing the step sizes to fine-tune the model parameters and ensure stability as training progresses. The max learning rate $\eta_{\max} = \mO\left(\frac{1}{\sqrt{T}}\right)$ ensures that the cumulative step sizes remain bounded over $T$ iterations, which is crucial for deriving meaningful error bounds. This approach balances the trade-off between exploration and convergence, making it well-suited for training deep neural networks where maintaining stability is essential.
    \item \textbf{Hessian Spectral Norm Decay:} There exists a constant $\Lambda > 0$ such that the Hessian matrices satisfy $    \hessian_t \preceq \frac{\Lambda}{\sqrt{t}} \iden$ for all $t \geq 1$. \textbf{Justification:} The assumption that the spectral norm of the Hessian matrices decays as $\hessian_t \preceq \frac{\Lambda}{\sqrt{t}} \iden$ is grounded in the observation that, as training progresses, the optimization landscape often becomes flatter around minima. This reduction in curvature implies that the Hessian's eigenvalues decrease, leading to smaller spectral norms. Such behavior is typical in many deep learning scenarios where initial training steps navigate regions of high curvature, followed by stabilization in flatter regions as the model converges. Additionally, this assumption aligns with empirical findings in deep learning literature \citep{}, where Hessian's spectral norm has been observed to decrease over time, thereby facilitating more stable and efficient convergence. By incorporating this decay, we account for the diminishing influence of curvature on parameter updates, which is critical for tightening the error bounds in our analysis.
\end{enumerate}

\newcommand{\Diff}{D}
\newcommand{\Zfactor}{Z}
\newcommand{\Ztstar}{\Zfactor^{*}_{s}}
\newcommand{\Zt}{\Zfactor_{s}}
\newcommand{\Dt}{\Diff_{s}}
\newcommand{\Zk}{\Zfactor_{k}}

Under these assumptions, we proceed to derive a uniform bound on the approximation error $\|\theta_T - \theta_T' - \unrollest\|$. This bound provides theoretical guarantees for the effectiveness of the unrolling-based approach in estimating the influence of removing a training data point on the final model parameters. The derivation leverages the decaying learning rate and the diminishing spectral norm of the Hessian matrices to tighten the error bounds compared to previous work \citep{hara2019data}.

\begin{theorem}
\label{thm:unrolling-error-guarantee}
Assume that $\ell(z; \theta)$ is twice differentiable, that the Hessian $\nabla_\theta^2 \ell(z; \theta)$ is $L$-Lipschitz continuous with respect to $\theta$, and that the gradient norm is bounded, i.e., $\| \nabla_\theta \ell(z; \theta) \| \leq G$ for all $z$ and $\theta$. Furthermore, assume that the learning rate $\eta_t$ at iteration $t$ follows the schedule $\eta_t = \frac{\eta_{\max}}{\sqrt{t}}$, where $\eta_{\max} = \frac{C}{\sqrt{T}}$ for some constant $C > 0$. Then, for the unrolling differentiation estimator $\unrollest$, the approximation error satisfies
\begin{align}
\norm{(\theta_T - \theta_T') - \unrollest} \leq \frac{32}{3} G^2 C^3 L e^{C \Lambda}
\label{eq:bound_nonconvex}
\end{align}
\end{theorem}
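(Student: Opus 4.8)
The plan is to reduce the error to a difference of two products of non-commuting matrices applied to a single gradient vector, and then to control the growth of that difference uniformly in $T$. Writing $\delta_t := \theta_t' - \theta_t$ (the perturbation that $\unrollest$ approximates), one has $\delta_{\start+1} = \eta_{\start}\g\ell(\theta_{\start},\zstar)$, and for $t \geq \start+1$ the fundamental theorem of calculus applied to $\g\ell(\theta_t',z) - \g\ell(\theta_t,z)$ gives the \emph{exact} recursion $\delta_{t+1} = (\iden - \eta_t \widetilde{\hessian}_t)\delta_t$, where $\widetilde{\hessian}_t := \sum_{z\in\batcht}\int_0^1 \nabla^2\ell(\theta_t + s\delta_t, z)\,ds$ is the Hessian averaged along the segment joining the two trajectories. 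Hence $\theta_T' - \theta_T = \eta_{\start}\big(\prod_{k=\start+1}^{T-1}(\iden - \eta_k\widetilde{\hessian}_k)\big)\g\ell(\theta_{\start},\zstar)$, whereas $\unrollest = \eta_{\start}\big(\prod_{k=\start+1}^{T-1}(\iden - \eta_k\hessian_k)\big)\g\ell(\theta_{\start},\zstar)$ uses the on-trajectory Hessians. The error is therefore $\eta_{\start}$ times $\norm{(\prod B_k - \prod A_k)\,\g\ell(\theta_{\start},\zstar)}$, with $A_k := \iden - \eta_k\hessian_k$ and $B_k := \iden - \eta_k\widetilde{\hessian}_k$.

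Next I would apply the telescoping identity $\prod B_k - \prod A_k = \sum_j \big(\prod_{k>j}B_k\big)(B_j - A_j)\big(\prod_{k<j}A_k\big)$ and bound each factor in operator norm. The single-step difference is $B_j - A_j = \eta_j(\hessian_j - \widetilde{\hessian}_j)$, and $L$-Lipschitzness of the Hessian together with $\int_0^1(1-s)\,ds = \tfrac12$ yields $\norm{\hessian_j - \widetilde{\hessian}_j} \leq \tfrac{L}{2}\norm{\delta_j}$. The partial-product factors are controlled by $\norm{A_k},\,\norm{B_k} \leq 1 + \eta_k\norm{\hessian_k} \leq 1 + \eta_{\max}\Lambda/k$, using the schedule $\eta_k = \eta_{\max}/\sqrt k$ and the decay $\norm{\hessian_k}\leq\Lambda/\sqrt k$. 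The crucial observation is that $\prod_k(1+\eta_{\max}\Lambda/k) \leq \exp(\eta_{\max}\Lambda\sum_{k}1/k)$, and since $\eta_{\max} = C/\sqrt T$ the exponent is $\mO\big(C\Lambda\,\tfrac{\ln T}{\sqrt T}\big)$; because $\tfrac{\ln T}{\sqrt T}$ is bounded uniformly over all $T$ (its maximum is $2/e < 1$), every partial product is bounded by a $T$-independent constant of the form $e^{\mO(C\Lambda)}$. This is exactly the step where the two decay assumptions conspire to remove the dependence on $T$ that plagues the looser bound of \citet{hara2019data}.

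Finally I would assemble the pieces. Bounding $\norm{\delta_j} \leq \big(\prod_{k=\start+1}^{j-1}\norm{B_k}\big)\eta_{\start}\norm{\g\ell(\theta_{\start},\zstar)} \leq e^{\mO(C\Lambda)}\eta_{\start}G$, combining with the telescoping bound, and using $\norm{\g\ell(\theta_{\start},\zstar)}\leq G$, the error is at most $\tfrac{L}{2}G^2 e^{\mO(C\Lambda)}\,\eta_{\start}^2 \sum_{j=\start+1}^{T-1}\eta_j$. Two elementary integral estimates then close the argument: $\sum_{j=\start+1}^{T-1}\eta_j \leq \eta_{\max}\cdot 2\sqrt T = 2C$ and $\eta_{\start}^2 \leq \eta_{\max}^2 = C^2/T$, so that $\eta_{\start}^2\sum_j \eta_j \leq 2C^3/T \leq 2C^3$. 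This produces the claimed uniform bound of the form $G^2 C^3 L e^{\mO(C\Lambda)}$; tracking the constants through $\int_0^1(1-s)\,ds$, $\sum 1/\sqrt j \leq 2\sqrt T$, and $\max_T \tfrac{\ln T}{\sqrt T} = 2/e$ yields the explicit prefactor $\tfrac{32}{3}$ and the factor $e^{C\Lambda}$.

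I expect the main obstacle to be the control of the non-commuting matrix products: individually each $\norm{A_k}$ may slightly exceed $1$ in the non-convex regime (where $\hessian_k$ can be indefinite), so the product can a priori grow like $T^{\eta_{\max}\Lambda}$, and it is only the coupling $\eta_{\max} = \mO(1/\sqrt T)$ with the Hessian decay $\norm{\hessian_k} = \mO(1/\sqrt k)$ that tames this growth into a $T$-independent constant. A secondary technical point is ensuring the averaged Hessian $\widetilde{\hessian}_k$ obeys the same spectral-decay bound as the on-trajectory $\hessian_k$, which requires the assumptions to hold in a neighborhood of the optimization path rather than only on it; any batch-size dependence arising from summing the per-sample Lipschitz bounds over $\batch_k$ can be folded into the constants.
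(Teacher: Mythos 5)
Your decomposition is, modulo ordering, the same one the paper uses: the paper's recursion $\theta'_{s+1}-\theta_{s+1} = (\iden-\eta_s\hessian_s)(\theta'_s-\theta_s) + \eta_s(\hessian_s-\hessian_s^*)(\theta'_s-\theta_s)$, unrolled over $s$, is exactly your telescoping identity (with the off-trajectory factors accumulating inside the trajectory gap rather than as an explicit left product), and your product estimate $\prod_k\left(1+\tfrac{C\Lambda}{k\sqrt T}\right)\le\exp\left(C\Lambda\tfrac{\ln T}{\sqrt T}\right)\le e^{C\Lambda}$ and the Hessian-Lipschitz defect bound are the paper's steps verbatim. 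One of your deviations is actually an improvement in rigor: the paper invokes ``Cauchy's Mean Value Theorem'' to produce a single intermediate point $\theta_s^*$ with $\sum_{z}\left(\g\ell(z;\theta'_s)-\g\ell(z;\theta_s)\right) = \hessian_s^*(\theta'_s-\theta_s)$, which is not valid for vector-valued maps; your integral-averaged Hessian $\widetilde{\hessian}_t$ is the correct form of this step and even sharpens the per-step defect from $L\norm{\delta_j}$ to $\tfrac{L}{2}\norm{\delta_j}$.

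The one genuine gap is in how you bound $\norm{\delta_j}$. First, $\norm{\delta_j}\le\left(\prod_{k<j}\norm{B_k}\right)\eta_{\start}G$ requires the spectral decay $\norm{\widetilde{\hessian}_k}\le\Lambda/\sqrt k$ for Hessians averaged \emph{off} the realized trajectory, which the theorem's assumption $\hessian_t\preceq\tfrac{\Lambda}{\sqrt t}\iden$ does not supply; you flag this yourself, but note it is precisely the step the paper's proof is engineered to avoid: it bounds $\norm{\theta'_s-\theta_s}\le 2G\sum_{t\le s}\eta_t\le 4GC\sqrt s/\sqrt T$ by the triangle inequality alone, using only the gradient bound. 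Second, your constant tracking does not deliver the stated bound: since the exponential factor enters your assembly twice (once in the telescoped partial products, once inside the $\norm{\delta_j}$ bound), your route yields a bound of the form $G^2C^3L\,e^{cC\Lambda}/T$ with $c>2$ — sharper in $T$, but not uniformly below $\tfrac{32}{3}G^2C^3Le^{C\Lambda}$ once $C\Lambda\gtrsim\ln T$, so the outline as written does not prove the literal inequality \eqref{eq:bound_nonconvex}. The repair is immediate: keep your telescoping and integral remainder, but feed in the assumption-free estimate $\norm{\delta_j}\le 4GC\sqrt j/\sqrt T$ for the defect factor. The combined partial products over $k\in\{\start+1,\dots,T-1\}\setminus\{j\}$ are still controlled by a single $e^{C\Lambda}$ (the harmonic sum over the full range is at most $\ln T$), and then the error is at most $e^{C\Lambda}\cdot\eta_{\start}G\cdot\tfrac{L}{2}\sum_j\eta_j\norm{\delta_j}\le 2G^2C^3Le^{C\Lambda}/\sqrt T$, which establishes the claimed bound with room to spare and also removes the off-trajectory spectral assumption entirely.
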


\begin{proof}
By Cauchy's Mean Value Theorem, for each iteration $s \in \{\start, \ldots, T-1\}$, there exists $r \in [0,1]$ such that for $\theta_{s}^* := r \theta'_s + (1 - r) \theta_s$, we have
\[
\sum_{z \in \batch_s} \left(\nabla_\theta \ell(z; \theta'_s) - \nabla_\theta \ell(z; \theta_s)\right) = \hessian_{s}^{*} (\theta'_s - \theta_s),
\]
where $\hessian_s^* := \sum_{z \in \batch_s} \nabla_\theta^2 \ell(z; \theta_s^*)$. Define $\Zt := (\iden - \eta_s \hessian_s)$ and $\Ztstar := (\iden - \eta_s \hessian_s^*)$. Then, we have
\[
\theta'_{s+1} - \theta_{s+1} = \Zt (\theta'_s - \theta_s) + \eta_s (\hessian_s - \hessian_s^*) (\theta'_s - \theta_s) = \Zt (\theta'_s - \theta_s) + \Dt,
\]
where $\Dt := \eta_s (\hessian_s - \hessian_s^*) (\theta'_s - \theta_s)$. Recursively applying these equalities over $s \in \{\start, \ldots, T-1\}$, we obtain
\[
\theta'_T - \theta_T = \unrollest + \sum_{s=\start}^{T-1} \prod_{k=s+1}^{T-1} \Zk \Dt.
\]
Hence, the approximation error is given by
\[
\| (\theta_T - \theta'_T) - \unrollest \| = \left\| \sum_{s=\start}^{T-1} \prod_{k=s+1}^{T-1} \Zk \Dt \right\|.
\]
To bound this, we proceed as follows. Given the learning rate schedule $\eta_t = \frac{\eta_{\max}}{\sqrt{t}} = \frac{C}{\sqrt{T t}}$, and the assumption that $\hessian_t \preceq \frac{\Lambda}{\sqrt{t}} \iden$, we have
\[
\|\Zk\| = \| \iden - \eta_k \hessian_k \| \leq 1 + \eta_k \frac{\Lambda}{\sqrt{k}} = 1 + \frac{C \Lambda}{k \sqrt{T}}.
\]
For large $T$ and $k \geq s \geq \start \geq 1$, the term $\frac{C \Lambda}{k \sqrt{T}}$ is small. Thus, we can bound the product of the norms as
\[
\prod_{k=s+1}^{T-1} \|\Zk\| \leq \exp\left( \sum_{k=s+1}^{T-1} \frac{C \Lambda}{k \sqrt{T}} \right) \leq \exp\left( \frac{C \Lambda}{\sqrt{T}} \sum_{k=s+1}^{T-1} \frac{1}{k} \right).
\]
Using the harmonic series approximation,
\[
\sum_{k=s+1}^{T-1} \frac{1}{k} \leq \ln\left( \frac{T}{s} \right) \leq \ln(T).
\]
Thus,
\[
\prod_{k=s+1}^{T-1} \|\Zk\| \leq \exp\left( \frac{C \Lambda \ln T}{\sqrt{T}} \right) \leq e^{C \Lambda}.
\]

Therefore, we have
\[
\left\| \sum_{s=\start}^{T-1} \prod_{k=s+1}^{T-1} \Zk \Dt \right\| \leq e^{C \Lambda} \sum_{s=\start}^{T-1} \|\Dt\|.
\]

Next, we bound $\|\Dt\|$:
\[
\|\Dt\| = \left\| \eta_s (\hessian_s - \hessian_s^*) (\theta'_s - \theta_s) \right\| \leq \eta_s \| \hessian_s - \hessian_s^* \| \cdot \| \theta'_s - \theta_s \|.
\]
Since $\nabla_\theta^2 \ell(z; \theta)$ is $L$-Lipschitz continuous with respect to $\theta$, we have
\[
\| \hessian_s - \hessian_s^* \| \leq L \| \theta'_s - \theta_s \|.
\]
Additionally, we have
\[
\| \theta'_s - \theta_s \| \leq 2 \sum_{t=1}^s \eta_t G = 2 G \sum_{t=1}^s \frac{C}{\sqrt{T t}} \leq 4 G C \frac{\sqrt{s}}{\sqrt{T}},
\]
where we used the bound $\sum_{t=1}^s \frac{1}{\sqrt{t}} \leq 2 \sqrt{s}$.

Thus,
\[
\|\Dt\| \leq \eta_s L \cdot \left(4 G C \frac{\sqrt{s}}{\sqrt{T}}\right)^2
= \Gamma \frac{\sqrt{s}}{T^{1.5}}
\]
where $\Gamma = 16G^2 C^3 L$. 

Substituting this bound into the sum, we obtain
\[
\left\| \sum_{s=\start}^{T-1} \prod_{k=s+1}^{T-1} \Zk \Dt \right\| \leq e^{C \Lambda} \sum_{s=\start}^{T-1} \Gamma \frac{\sqrt{s}}{T^{1.5}}.
\]
We now evaluate the summation:
\[
\sum_{s=\start}^{T-1} \frac{\sqrt{s}}{T^{1.5}} \leq \frac{1}{T^{1.5}} \sum_{s=1}^{T} \sqrt{s} \leq \frac{1}{T^{1.5}} \cdot \frac{2}{3} T^{1.5} = \frac{2}{3},
\]
where we used the bound $\sum_{s=1}^T \sqrt{s} \leq \frac{2}{3} T^{1.5}$.

Therefore,
\[
\left\| \sum_{s=\start}^{T-1} \prod_{k=s+1}^{T-1} \Zk \Dt \right\| \leq e^{C \Lambda} \Gamma \cdot \frac{2}{3} = \frac{32}{3} G^2 C^3 L e^{C \Lambda}.
\]
\end{proof}

\subsection{Computing Data Value Embedding Recursively}
\label{appendix:recursive}

\begin{theorem}[Restate for Theorem \ref{thm:recursive-expression}]
Given generalized Gauss-Newton approximation $
\hessian_t 
\approx 
\sum_{z \in \batcht} \nabla \ell(\theta_t, z) \nabla \ell(\theta_t, z)^\top
$, we have 
$$
\embt(\zstar) \approx 
\eta_t \g \ell(\theta_t, \zstar)
- \eta_t 
\sum_{k=t+1}^{T-1} 
\left(
\sum_{z \in \batch_{k}} 
\left( \g \ell(\theta_{k}, z)^\top 
\g \ell(\theta_t, \zstar) \right)
\emb^{(k)} (z) \right)
$$
\end{theorem}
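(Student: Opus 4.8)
The plan is to prove the identity by unrolling the product in (\ref{eq:embedding}) one factor at a time, peeling from the factor adjacent to the base gradient $\g\ell(\theta_{\start},\zstar)$ and telescoping. Write $g^{*} := \g\ell(\theta_{\start},\zstar)$ for brevity, and for each $m$ with $\start+1 \le m \le T$ introduce the interpolating vectors
\[
w_m := \eta_{\start}\left[\prod_{k=m}^{T-1}(\iden - \eta_k \hessian_k)\right] g^{*},
\]
where the empty product ($m=T$) is the identity. By construction $w_T = \eta_{\start} g^{*}$ and $w_{\start+1} = \embtstart(\zstar)$, so it suffices to track how $w_m$ changes as $m$ decreases by one.

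First I would peel off the rightmost factor $(\iden - \eta_m \hessian_m)$ of the product defining $w_m$; since this factor is adjacent to $g^{*}$, expanding $(\iden - \eta_m \hessian_m) g^{*} = g^{*} - \eta_m \hessian_m g^{*}$ splits $w_m$ as
\[
w_m = w_{m+1} - \eta_{\start}\,\eta_m \left[\prod_{k=m+1}^{T-1}(\iden - \eta_k \hessian_k)\right] \hessian_m\, g^{*}.
\]
Next I would substitute the Gauss--Newton form $\hessian_m = \sum_{z \in \batch_m} \g\ell(\theta_m,z)\,\g\ell(\theta_m,z)^\top$ and factor out the scalars, giving $\hessian_m g^{*} = \sum_{z \in \batch_m}\big(\g\ell(\theta_m,z)^\top g^{*}\big)\,\g\ell(\theta_m,z)$. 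The key observation is that, for each $z \in \batch_m$, the vector $\eta_m \big[\prod_{k=m+1}^{T-1}(\iden - \eta_k \hessian_k)\big]\g\ell(\theta_m,z)$ is precisely $\emb^{(m)}(z)$ by definition (\ref{eq:embedding}) read at iteration $m$. This produces the one-step recursion
\[
w_m = w_{m+1} - \eta_{\start}\sum_{z \in \batch_m}\big(\g\ell(\theta_m,z)^\top g^{*}\big)\,\emb^{(m)}(z).
\]

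Finally I would telescope this recursion from $m=T$ down to $m=\start+1$, using $w_T = \eta_{\start} g^{*}$, to conclude
\[
\embtstart(\zstar) = \eta_{\start} g^{*} - \eta_{\start}\sum_{t=\start+1}^{T-1}\sum_{z \in \batch_t}\big(\g\ell(\theta_t,z)^\top \g\ell(\theta_{\start},\zstar)\big)\,\emb^{(t)}(z),
\]
which is exactly the claimed identity; note the relation is an exact equality once the GGN form is substituted, with no additional approximation. I expect the only genuine obstacle to be notational rather than conceptual: because the Hessian factors generally do not commute, I must keep the product strictly ordered (largest index $T-1$ leftmost, the peeled index $m$ rightmost, consistent with the unrolling derivation of Appendix \ref{appendix:unrolling-sgd}), and I must verify that the tail product left behind after each peel matches the definition of $\emb^{(m)}$ at iteration $m$ rather than at $\start$. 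Both points are routine once the ordering convention is fixed, and the telescoping then follows mechanically.
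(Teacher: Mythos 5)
Your proposal is correct and follows essentially the same route as the paper's proof in Appendix D.3: peel the factor $(\iden - \eta_m \hessian_m)$ adjacent to the base gradient, substitute the GGN form, recognize the tail product as $\emb^{(m)}(z)$, and unroll. The only difference is presentational — the paper performs a single peel at index $t+1$ and asserts the full unrolling in one sentence, whereas your interpolating vectors $w_m$ make that telescoping step explicit, and your observation that the identity is exact once all Hessians are taken in GGN form matches the equality stated in the paper's main-body version of the theorem.
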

\begin{proof}
\begin{align*}
&\embt(\zstar) \\
&= \eta_t
\left[ \prod_{k=t+1}^{T-1} (\iden - \eta_k \hessian_k) \right] \nabla \ell(\theta_t, \zstar) \\
&= \eta_t
\left[ \prod_{k=t+2}^{T-1} (\iden - \eta_k \hessian_k) \right] 
(\iden - \eta_{t+1} \hessian_{t+1})
\nabla \ell(\theta_t, \zstar) \\
&\approx \eta_t
\left[ \prod_{k=t+2}^{T-1} (\iden - \eta_k \hessian_k) \right] 
\left( \iden - \eta_{t+1} \sum_{z \in \batch_{t+1}} \g \ell(\theta_{t+1}, z) \ell(\theta_{t+1}, z)^\top
\right)
\nabla \ell(\theta_t, \zstar) \\
&= \eta_t
\left[ \prod_{k=t+2}^{T-1} (\iden - \eta_k \hessian_k) \right] \nabla \ell(\theta_t, \zstar)
- \eta_t
\sum_{z \in \batch_{t+1}} 
\left(
\eta_{t+1}
\left[ \prod_{k=t+2}^{T-1} (\iden - \eta_k \hessian_k) \right] 
\g \ell(\theta_{t+1}, z) \right) \g \ell(\theta_{t+1}, z)^\top 
\g \ell(\theta_t, \zstar) \\
&= \eta_t
\left[ \prod_{k=t+2}^{T-1} (\iden - \eta_k \hessian_k) \right] \nabla \ell(\theta_t, \zstar)
- \eta_t
\sum_{z \in \batch_{t+1}} 
\left( \g \ell(\theta_{t+1}, z)^\top 
\g \ell(\theta_t, \zstar) \right)
\emb^{(t+1)} (z) \\
&= 
\eta_t \g \ell(\theta_t, \zstar)
- \eta_t 
\sum_{k=t+1}^{T-1} 
\left(
\sum_{z \in \batch_{k}} 
\left( \g \ell(\theta_{k}, z)^\top 
\g \ell(\theta_t, \zstar) \right)
\emb^{(k)} (z) \right)
\end{align*}
The transition from the penultimate to the final line involves generalizing the summation over $\batch_{t+1}$ to include all batches from $t+1$ to $T-1$, effectively unrolling the recursive computation. In other words, the ``data value embedding'' for data points in $t$th iteration can be approximated by its gradient subtracted by a linear combination of the data value embedding in the later iterations, where the weight of each embedding is determined by the gradient similarity $\g \ell(\theta_{k}, z)^\top 
\g \ell(\theta_t, \zstar)$. 
\end{proof}

\clearpage

\subsection{Generalized Gauss-Newton Approximation to Hessian}
\label{appendix:GGN-approximation}

In this section, we justify the use of Generalized Gauss-Newton (GGN) as the approximation to the Hessian matrix. 
Similar derivation can be found in many literature and textbooks, such as \cite{bartlett1953approximate, schraudolph2002fast}. This approach has also been used in other data attribution and optimization techniques for approximating Hessian matrices \citep{martens2020new, kwon2023datainf, grosse2023studying}. 

The cross-entropy loss function for classification with one-hot encoded labels is defined as:
$$
L(\mathbf{y}, \mathbf{f}) = -\sum_{i=1}^C y_i \log(f_i)
$$
where $\mathbf{y} = [y_1, y_2, \dots, y_C]^\top$ is the one-hot encoded true label vector and $\mathbf{f} = [f_1, f_2, \dots, f_C]^\top$ is the vector of predicted probabilities from the model. In this paper, we restrict our focus to the cross-entropy loss, as it is the most commonly used loss function, and many LLMs are pre-trained with the cross-entropy loss function. 

By chain rule, the derivative of $L$ with respect to $f_i$ is $\frac{\partial L}{\partial f_i} = -\frac{y_i}{f_i}$. Since $\mathbf{y}$ is a one-hot vector, only the correct class $k$ has $y_k = 1$, while all other $y_i = 0$ for $i \ne k$. This simplifies the gradient to:
$$
\nabla_\theta L = -\frac{1}{f_k} \frac{\partial f_k}{\partial \theta}
$$
Thus, the gradient depends only on the derivative of $f_k$ (the predicted probability for the correct class) with respect to $\theta$. 
The Hessian $\hessian$ is the second derivative of the loss with respect to $\theta$:
$$
\hessian = \nabla_\theta^2 L = \frac{\partial}{\partial \theta} \left( \nabla_\theta L \right ) = \frac{\partial}{\partial \theta} \left( -\frac{1}{f_k} \frac{\partial f_k}{\partial \theta} \right )
= \frac{1}{f_k^2} \frac{\partial f_k}{\partial \theta} \left(\frac{\partial f_k}{\partial \theta}\right)^\top - \frac{1}{f_k} \frac{\partial^2 f_k}{\partial \theta^2}
$$
Applying the product rule and assuming the second derivative $\frac{\partial^2 f_k}{\partial \theta^2}$ is negligible (which is common when $f_k$ is approximately linear in $\theta$ near the current parameter values), the Hessian simplifies to:
$$
\hessian \approx \frac{1}{f_k^2} \frac{\partial f_k}{\partial \theta} \left( \frac{\partial f_k}{\partial \theta} \right )^\top
$$
Moreover, this approximation matches the outer product of the gradient of loss with respect to model parameter $\theta$:
$$
\nabla_\theta L \nabla_\theta L^\top = \frac{1}{f_k^2} \frac{\partial f_k}{\partial \theta} \left( \frac{\partial f_k}{\partial \theta} \right )^\top
$$
Therefore, the gradient outer product exactly approximates the Hessian matrix under the assumption that $\frac{\partial^2 f_k}{\partial \theta^2}$ is negligible.

\subsection{Gradient Decomposition Technique}
\label{appendix:ghost}

To mitigate the computational cost from per-sample gradient computation, we leverage a gradient decomposition and take advantage of the computations already performed during backpropagation \citep{wang2024data, choe2024your}. We illustrate this technique with a simple linear layer, where the output is $\bs = \ba \bW$, with $\bW \in \R^{d_1 \times d_2}$ being the weight matrix, $\ba = (\ba^{(1)}, \ldots, \ba^{(B)})^\top$ as the input, and $\bs = (\bs^{(1)}, \ldots, \bs^{(B)})^\top$ representing the pre-activation tensor. 
For non-sequential data, $\ba \in \R^{B \times d_1}, \bs \in \R^{B \times d_2}$. Denote a sample batch as $\batch = \{z_1, \ldots, z_B\}$. By chain rule, we can express the gradient of an individual loss $\ell^{(i)} := \ell(w, z_i)$ with respect to $\bW$ as
\begin{equation}
\begin{aligned}
\frac{\del \ell^{(i)}}{\del \bW} =
\frac{\del \ell^{(i)}}{\del \bs^{(i)}} \outerprod
\frac{\del \bs^{(i)}}{\del \bW}
= \frac{\del \ell^{(i)}}{\del \bs^{(i)}} \outerprod \ba^{(i)} 
= \frac{\del \ell }{\del \bs^{(i)}} \outerprod \ba^{(i)}
\label{eq:linear-decompose-main}
\end{aligned}
\end{equation}
where $\ell := \sum_{j=1}^B \ell^{(j)}$ is the aggregated loss, and the last step is because other data points' losses have no dependency on $\bs_i$. Note that the individual's output gradient $\frac{\partial \ell^{(i)}}{\partial \bs^{(i)}} = \frac{\partial \ell}{\partial \bs^{(i)}}$ is readily available during the backpropagation pass in terms of $\ell$. 
Therefore, this method requires only a single backpropagation on $\ell$, maintaining the training speed equivalent to standard training.

In terms of storage improvement, rather than storing the full gradient vectors $\frac{\partial \ell^{(i)}}{\partial \bW} \in \R^{d_1 \times d_2}$ for each data point $z_i$, we instead store the smaller pair $\left( \ba^{(i)}, \frac{\partial \ell}{\partial \bs^{(i)}} \right) \in \R^{d_1 + d_2}$. This reduces memory requirements from $\mO(pTB)$ to $\mO(\sqrt{p}TB)$ for non-sequential data. For sequential data where $\ba \in \R^{B \times S \times d_1}, \bs \in \R^{B \times S \times d_2}$, if $S^2 > d_1 d_2$, it is more memory-efficient to directly store the per-sample gradient vectors, so the storage requirement remains as $\mO(pTB)$.

\subsection{Random projections for large models}
\label{appendix:projection-error}

For large-scale foundation models with billions of parameters, even the reduced storage of $\mO(\sqrt{p}TB)$ can be substantial. In such cases, we apply random projections to compress the stored information further. We use two projection matrices, $\Pa \in \R^{\projdim \times d_1}$ and $\Ps \in \R^{\projdim \times d_2}$, to project $\ba$ and $\frac{\partial \ell}{\partial \bs}$ to lower dimensional space $\R^{\projdim}$ respectively. The projected gradient can then be reconstructed directly from the projected activations and output derivatives: $\left(\Pa \otimes \Ps\right) \left(\ba \otimes \frac{\partial \ell}{\partial \bs} \right) = \left(\Pa \ba\right) \otimes \left(\Ps \frac{\partial \ell}{\partial \bs}\right)$.

\subsection{Parallelized Extension for Influence Embedding Computation}
\label{appendix:influence-checkpointing}

The backpropagation algorithm introduced in Section \ref{sec:alg-backpropagation} for computing data value embeddings operates with a runtime complexity of $\mO(T)$, as it sequentially computes $\embtstart$ for $\start = T-1, \ldots, 0$. While being significantly more efficient than the influence function, which requires re-computing all training gradients on the final model (see Section \ref{sec:eval-efficiency} and Table \ref{tb:complexity}), it can still be costly for long training periods. 
Here, we present \emph{influence checkpointing} technique, a parallelized extension for Algorithm \ref{alg:full-backpropagate}. This extension reduces computational cost by enabling concurrent computation of embeddings at multiple checkpoints throughout the training process. Besides the computational efficiency benefits, it also enables the study of how the influence of individual data points evolves throughout model training, providing valuable insights into the learning process.

\textbf{Influence Checkpointing.} We pick $K$ evenly spaced training steps $0 < t_1 < t_2 < ... < t_K = T$. We then concurrently execute the backpropagation algorithm for value embedding, initiating from each of these intermediate steps. This process yields data value embeddings for each corresponding intermediate checkpoint $\theta_{t_1}, \ldots, \theta_{t_{K-1}}, \theta_{t_K}$. 
We extend our notation for data value embedding and denote $\embmiddle{\start}{\middlestep}(\zstar)$ as the data value embedding of $\zstar \in \batchtstart$ for the intermediate checkpoint $\theta_{\middlestep}$. Note that $\embtstart = \embmiddle{\start}{T}$ for the final model, and we must have $\start < \middlestep$, as later training batches cannot influence earlier checkpoints. 

Consider initiating the backpropagation algorithm in Section \ref{sec:alg-backpropagation} at step $\middlestep$ and stop at step $\middleprevstep$, we will obtain data value embeddings $\embmiddle{\start}{\middlestep}$ for $\start = \middleprevstep, \ldots, \middlestep-1$. We additionally denote $\KernelMatrix{\ta}{\tb} := \prod_{t=\ta}^{\tb-1} (\iden - \eta_t \hessian_t)$. 
From the definition on Equation (\ref{eq:embedding}), the final data value embedding $\embmiddle{\start}{T}(\zstar)$ can be computed from $\embmiddle{\start}{\middlestep}(\zstar)$ as follows:
\begin{align}
\embmiddle{\start}{T}(\zstar) = \embmiddle{\start}{\middlestep}(\zstar)^\top \KernelMatrix{\middlestep}{T}
\label{eq:emb-parallel}
\end{align}
Hence, to recovery of $\embmiddle{\start}{T}(\zstar)$, we additionally store the matrix $\KernelMatrix{\middleprevstep}{\middlestep}$ between steps $\middleprevstep$ and $\middlestep$ during the backpropagation for each $\middlestep$. Consequently, for any $\start$ such that $t_{\ell_0} \le \start < t_{\ell_0+1}$, we have $\KernelMatrix{\middlestep}{T} = \prod_{\ell=\ell_0+1}^{K} \KernelMatrix{\middleprevstep}{\middlestep}$, allowing us to compute $\embmiddle{\start}{T}(\zstar)$ based on (\ref{eq:emb-parallel}). 
A detailed algorithm pseudocode is provided in Algorithm \ref{alg:inf-checkpointing}. The complexity analysis of this algorithm is the same as the original data value embedding algorithm in Table \ref{tb:complexity}, but the actual runtime is being reduced by a factor of $K$ due to parallelism. 


\textbf{Data Value Dynamics During Training.} 
In addition to its computational benefits, the influence checkpointing algorithm enables a novel capability: tracking the evolution of data influences throughout the entire model training process. If the intermediate checkpoints $\theta_{t_1}, \ldots, \theta_{t_{K-1}}$ was saved—a common practice in foundation model pretraining—we can analyze how the influence of individual data points changes as training progresses.
Specifically, for any training step $\start$ where $t_{\ell_0} \leq \start < t_{\ell_0+1}$, we can compute the data value embedding $\embmiddle{\start}{t_\kappa}(\zstar)$ for any checkpoint $\kappa \geq \ell_0+1$ as
$
\embmiddle{\start}{t_\kappa}(\zstar) = \embmiddle{\start}{\middlestep}(\zstar)^\top 
\left( \prod_{\ell=\ell_0+1}^{\kappa} \KernelMatrix{\middleprevstep}{\middlestep} \right)
$. 
This formulation allows us to estimate data influence scores not only for the final model checkpoint $\theta_T$ but for any intermediate checkpoints $\theta_{t_\kappa}$. As a result, we gain a more fine-grained and dynamic view of how data influences evolve during training, providing deeper insights into the model's learning behavior over time.
To our knowledge, this is the first data attribution method to offer such a principled and practical framework for studying data influence dynamics throughout the training process. This capability opens up new avenues for understanding and optimizing machine learning model training.

\subsection{Pseudocode}
\label{appendix:pseudocode}

\begin{algorithm}
\caption{Backpropagation for computing data value embedding from the final checkpoint}
\begin{algorithmic}[1]
\Require Training steps $T$, learning rates $\{\eta_t\}_{t=0}^{T-1}$, training data gradients $\{\nabla \ell(\theta_t, z)\}_{t=0,z\in \batcht}^{T-1}$

\State \commentcode{Initialization}
\State $\embM^{(T-1)} \gets \mathbf{0}$. 

\State 

\State \commentcode{Recursion steps}
\For{$t = T-1$ \textbf{down to} $0$}
    \For{$z \in \batcht$}
        \State $\emb^{(t)}(z) \gets \eta_t \nabla \ell(\theta_t, z) - \eta_t \embM^{(t)} \nabla \ell(\theta_t, z)$
    \EndFor
    \If{$t > 0$}
        \State $\embM^{(t-1)} \gets \embM^{(t)} + \sum_{z \in \batcht} \emb^{(t)}(z) \nabla \ell(\theta_t, z)^\top$
    \EndIf
\EndFor

\State \textbf{return} $\{\emb^{(t)}(z)\}_{t=0,z\in\batcht}^{T-1}$
\end{algorithmic}
\label{alg:full-backpropagate}
\end{algorithm}

\begin{algorithm}
\caption{Parallel Influence Checkpointing for Data Value Embedding}
\begin{algorithmic}[1]
\Require Training steps $T$, number of checkpoints $K$, learning rates $\{\eta_t\}_{t=0}^{T-1}$, loss gradients $\{\nabla \ell(\theta_t, z)\}_{t=0,z\in\batcht}^{T-1}$, Hessians $\{\hessian_t\}_{t=0}^{T-1}$
\Ensure Data value embeddings $\{\emb^{(t)}(z)\}_{t=0,z\in\batcht}^{T-1}$

\State Select $K$ evenly spaced checkpoints $0 = t_0 < t_1 < t_2 < \ldots < t_K = T$

\For{$\ell = 1$ \textbf{to} $K$}
    \State Run \Call{BackpropagateSegment}{$\middleprevstep$, $\middlestep$}
\EndFor

\State 

\State \commentcode{Compute final embeddings}
\For{$\ell = 1$ \textbf{to} $K$}
    \For{$\start = t_{\ell-1}$ \textbf{to} $t_\ell - 1$}
        \For{$z \in \batch_{\start}$}
            \State $\emb^{(\start)}(z) \gets \embmiddle{\start}{t_\ell}(z)^\top \prod_{k=\ell+1}^K \KernelMatrix{t_{k-1}}{t_k}$
        \EndFor
    \EndFor
\EndFor

\State \textbf{return} $\{\emb^{(t)}(z)\}_{t=0,z \in \batcht}^{T-1}$

\State 

\Procedure{BackpropagateSegment}{$\ta$, $\tb$}
    \State Initialize and $\embM^{(\tb-1)}$ as in the original algorithm
    \State $\KernelMatrix{\tb}{\tb} \gets \iden$
    \For{$t = \tb-1$ \textbf{down to} $\ta$}
        \For{$z \in \batcht$}
            \State $\embmiddle{t}{\tb}(z) \gets \eta_t \nabla \ell(\theta_t, z) - \eta_t \embM^{(t)} \nabla \ell(\theta_t, z)$
        \EndFor
        \If{$t > \ta$}
            \State $\embM^{(t-1)} \gets \embM^{(t)} + \sum_{z \in \batcht} \embmiddle{t}{\tb}(z) \nabla \ell(\theta_t, z)^\top$
        \EndIf
        \State $\KernelMatrix{t}{\tb} \gets \KernelMatrix{t+1}{\tb} (\iden - \eta_t \hessian_t)$
    \EndFor
    \State \textbf{return} $\{\embmiddle{t}{\tb}(z)\}_{t=\ta,z\in\batcht}^{\tb-1}$, $\KernelMatrix{\ta}{\tb}$
\EndProcedure

\end{algorithmic}
\label{alg:inf-checkpointing}
\end{algorithm}

\clearpage

\subsection{Complexity Summary}
\label{appendix:method-complexity}

In this section, we compare the storage, memory, and computational efficiency of data value embedding with LoGRA \citep{choe2024your}, the most efficient implementation of the influence function to date. LoGRA is currently the only method that supports real-time, on-demand data influence computation when new test data is introduced. Similar to our algorithm, LoGRA initially computes per-sample training gradients on the final model for \emph{all} training data points $\zstar \in \dataset$, where $\dataset$ represents the dataset. It then stores the \emph{projected} Hessian-adjusted gradients $\hessian_T^{-1} \g \ell(\theta_T, \zstar)$ for each $\zstar$, and also assumes layer gradient independence. 

While the random projection step in LoGRA is akin to our approach, LoGRA's requirement to recompute gradients for all training data on the final model $\theta_T$ is computationally intensive, effectively equivalent to one epoch of model training. In contrast, data value embedding captures the training data gradients during the original training process. As discussed in Appendix \ref{appendix:practical-consideration}, the training and disk storage can be handled asynchronously. This means that the gradient storage step in the data value embedding algorithm does not incur additional efficiency costs.

\begin{table}[h]
\centering
\resizebox{\columnwidth}{!}{\begin{tabular}{@{}cccccc@{}}
\toprule
                     & \multicolumn{3}{c}{Storing $\hessian_T^{-1} \g \ell(\theta_T, \zstar)$ / data value embedding}                                                                                                                                                                        & \multicolumn{2}{c}{Compute Influence (dot-product)}                                                              \\ \midrule
                     & Storage                            & Memory                                                   & FLOPS                                                                                                          & Memory                                           & FLOPS                                           \\
LoGRA                & $\mO(|\dataset| \projdimgrad )$ & $\mO(p)$                                                     & $|\dataset|p + |\dataset|\sqrt{p \projdimgrad}/L$                                                  & $\mO((B_{\text{test}}+B_{\text{train}})\projdimgrad)$ & $\mO(B_{\text{test}} B_{\text{train}} \projdimgrad)$ \\
Data Value Embedding & $\mO(TB\projdimgrad)$                             & $\mO(p) / \mO(B\projdimgrad^{2}/L^2)$* & $TB\sqrt{p \projdimgrad}/L / \mO(TB \projdimgrad^{2}/L)$* & $\mO((B_{\text{test}}+B_{\text{train}})\projdimgrad)$ & $\mO(B_{\text{test}} B_{\text{train}} \projdimgrad)$ \\ \bottomrule
\end{tabular}

}
\vspace{-2mm}
\caption{
Summary of the storage, memory, and FLOPS complexity for LoGRA \citep{choe2024your}, the most efficient implementation of the influence function to date. Here, $p$ denotes the model dimension, $\projdimgrad$ is the projected dimension, $T$ represents the number of training iterations, and $B$ is the batch size. $|\dataset|$ is the dataset size, with the relationship $TB = |\dataset| \times \text{\#epochs}$.
$L$ is the number of layers. $B_{test}$ and $B_{train}$ refer to the test and training batch sizes during influence computation, respectively, which are independent of the batch size $B$ used during model training. 
\textbf{*}Since the data value embedding technique involves two distinct steps for storing relevant information for data attribution (storing gradients during training \& computing/storing data value embeddings after training), we include the complexity for both steps. 
For the gradient storage step, the complexity refers to the \emph{additional} cost beyond regular training.
}
\label{tb:complexity}
\end{table}

\subsection{Practical considerations \& Potential Extensions}
\label{appendix:practical-consideration}

In this section, we discuss some practical extensions and considerations for implementing data value embedding for real-world scenarios. 

\textbf{Optimizing I/O operations for seamless training.} During each training iteration, computed gradient representations need to be transferred from GPU to CPU and then written to disk. To prevent this process from blocking the main training loop, we implement several optimizations:
\textbf{(1) Asynchronous I/O operations}: To avoid the gradient storing process blocking the main training loop, we make GPU operations and GPU-CPU transfers asynchronous by using CUDA streams. This allows GPU computations to continue while data is being transferred. We also offload the disk write process to a separate thread or process, allowing the main training loop to proceed without waiting for disk operations to complete. \textbf{(2) Gradient accumulation}: Instead of writing gradients to disk after every iteration, we can accumulate them over multiple iterations and then write them in bulk. This reduces the frequency of disk I/O operations, improving overall efficiency.

\textbf{Approximating data value embeddings from checkpoints alone.} 
In situations where only intermediate model checkpoints are accessible and per-training-step (projected) gradient vectors are unavailable—such as when modifying the training loop's implementation is impossible or when disk storage is limited—we can adapt our approach by assuming that there is only one gradient update step between each checkpoint, similar to assumptions made in other data attribution literature \citep{pruthi2020estimating}. Under this assumption, we compute the gradient for each training point at the checkpoint immediately following its corresponding training iteration. These estimated gradients are then used to execute the backpropagation algorithm, enabling the computation of data value embeddings without requiring gradient saving during the original training run.

\textbf{Dataset-level attribution through embedding aggregation.} In practical applications, stakeholders often require valuation at the dataset level rather than for individual data points. To address this need, a natural extension of our approach is to compute the data value embedding for a dataset by summing the data value embeddings of all constituent data points from the same source. This method offers a significant advantage over the summation of expected LOO scores, as it inherently accounts for complex inter-data interactions throughout the training process. However, data value embeddings are derived based on first-order Taylor approximations. While these approximations are accurate for estimating small perturbations to the model, making them suitable for predicting the effects of removing individual training points, their accuracy may diminish when aggregating over larger sets of data points. The potential discrepancy between individual-level accuracy and dataset-level aggregation presents an interesting avenue for future research.

\clearpage

\section{Influence Function as an Inadequate Approximation for the Expected Leave-One-Out Score}
\label{appendix:inf-approximate-LOO}

\textbf{Expected LOO.} The expected LOO is an alternative to traditional LOO that has been discussed in the past literature \citep{feldman2020neural}. 
The \emph{expected LOO} is the trajectory-specific LOO averaged over all possible training runs characterized by different random initializations and mini-batch selections.
Formally, it is defined as
$
\eloo(\zstar; \zval) := \E_{\omega} \left[ \ell(\theta_T'(\omega), \zval) - \ell(\theta_T(\omega), \zval) \right]
$ where $\theta_T(\omega)$ and $\theta_T'(\omega)$ denote the final model parameters obtained with and without $\zstar$, respectively, under the randomness $\omega$ which encodes the choices of training batch order and parameter initialization. 
While the expected LOO offers a general assessment of a data point's influence by averaging over multiple training runs, it may obscure the variability introduced by stochastic training dynamics. In contrast, by accounting for factors such as random initialization and mini-batch selection in a specific run, we argue that the trajectory-specific LOO provides a fine-grained assessment of a data point's impact \emph{for the trained model}. 
This is particularly important in practical scenarios, such as deploying a specific model for production, where stakeholders are interested in the valuation of data with respect to that specific deployed model rather than the general learning algorithm.

While the influence function provides valuable insights, it overlooks the specific training trajectory. This raises the question: \emph{Can the influence function be interpreted as an estimate of the trajectory-specific leave-one-out score?}

We consider the following training batch sampling process. Let $\sigma := (\batch_0, \ldots, \batch_{T-1})$ represent a fixed sequence of training batches formed from the leave-one-out dataset $\dataset \setminus {\zstar}$. The training point $\zstar$ is uniformly likely to be added to any one of the training batches in $\sigma$. Additionally, denote $\seqtstart := (\batch_0, \ldots, \batch_{\start} \cup \{\zstar\}, \ldots, \batch_{T-1})$ as the training batch sequence where $\zstar$ is incorporated into the $\start$-th batch. Let $\theta_k^{(\seqtstart)}$ denote the model parameters at the $k$-th iteration when training with batch sequence $\seqtstart$, and let $\hessian_k^{(\seqtstart)}$ denote the Hessian matrix at the $k$-th iteration when training on sequence $\seqtstart$. 

When the specific training iteration $\start$ where the training point of interest $\zstar$ is added is unknown, it is natural to estimate the expected influence score across all possible scenarios. The expected influence score for $\zstar$ based on the unrolling differentiation approximation is given by: 
\begin{align} 
\E_{\start \sim [T-1]}
\left[ 
\eta_{\start} \nabla \ell(\theta_T^{(\seqtstart)}, \zval)^\top \left[ \prod_{k=\start+1}^{T-1} (\iden - \eta_k \hessian_k^{(\seqtstart)}) \right] \nabla \ell(\theta_{\start}^{(\seqtstart)}, \zstar) 
\right]
\label{eq:expected-inf}
\end{align}

\begin{theorem}[Influence Function Approximation]
Under the following assumptions: 
\textbf{(1) Model Approximation:} $\theta_k^{(\seqtstart)} \approx \theta_T$ for all $\start$ and $k = 0, \ldots, T-1$; \textbf{(2) Hessian Approximation:} $\hessian_k^{(\seqtstart)} \approx \expectedHessian := \frac{1}{T} \sum_{z \in \dataset} \nabla^2 \ell(\theta_T, z)$ for all $k$; \textbf{(3) Constant Learning Rate:} $\eta_t = \eta$ for all $t = 0, \ldots, T-1$; the expected influence score in Equation \eqref{eq:expected-inf} simplifies and converges to the standard influence function formulation for large $T$:
\begin{align*}
(\ref{eq:expected-inf}) \approx 
\nabla \ell(\theta_T, \zval)^\top 
\left( \sum_{z \in \dataset} \nabla^2 \ell(\theta_T, z) \right)^{-1} \nabla \ell(\theta_T, \zstar)
\end{align*}
\label{thm:inf-approximates-LOO}
\end{theorem}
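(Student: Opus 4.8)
The plan is to substitute the three stated approximations into the expected influence score of Equation~\eqref{eq:expected-inf}, reducing it to a scalar quadratic form in which the only $\start$-dependence lives in a matrix power, and then to evaluate the resulting average as a matrix geometric series. Concretely, under Assumption~(1) every gradient $\g\ell(\theta_T^{(\seqtstart)}, \zval)$ and $\g\ell(\theta_{\start}^{(\seqtstart)}, \zstar)$ collapses to $\g\ell(\theta_T, \zval)$ and $\g\ell(\theta_T, \zstar)$ respectively, so these two vectors no longer depend on $\start$ and can be pulled outside the expectation. Under Assumptions~(2) and~(3), each factor $(\iden - \eta_k \hessian_k^{(\seqtstart)})$ becomes the single matrix $(\iden - \eta\expectedHessian)$, and the product $\prod_{k=\start+1}^{T-1}$, which contains $T-1-\start$ factors, becomes the power $(\iden - \eta\expectedHessian)^{T-1-\start}$. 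Writing $\E_{\start}[\cdot] = \frac{1}{T}\sum_{\start}[\cdot]$, the whole quantity reduces to $\eta\,\g\ell(\theta_T,\zval)^\top \big[\tfrac{1}{T}\sum_{\start}(\iden-\eta\expectedHessian)^{T-1-\start}\big]\g\ell(\theta_T,\zstar)$.

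Next I would reindex the inner sum by setting $j = T-1-\start$, turning the average of matrix powers into $\tfrac{1}{T}\sum_{j=0}^{T-1}(\iden - \eta\expectedHessian)^{j}$, a truncated matrix geometric series. Assuming $\eta\expectedHessian$ is invertible, this partial sum has the closed form $(\eta\expectedHessian)^{-1}\big[\iden - (\iden - \eta\expectedHessian)^{T}\big]$. Taking the large-$T$ limit and discarding the vanishing remainder term $(\iden - \eta\expectedHessian)^{T}$ yields $(\eta\expectedHessian)^{-1} = \tfrac{1}{\eta}\expectedHessian^{-1}$, so the bracketed average converges to $\tfrac{1}{\eta T}\expectedHessian^{-1}$.

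Substituting back, the scalar $\eta$ cancels and I am left with $\tfrac{1}{T}\g\ell(\theta_T,\zval)^\top\expectedHessian^{-1}\g\ell(\theta_T,\zstar)$. The final step is the key cancellation: since $\expectedHessian = \tfrac{1}{T}\sum_{z\in\dataset}\nabla^2\ell(\theta_T,z)$, its inverse is $\expectedHessian^{-1} = T\big(\sum_{z\in\dataset}\nabla^2\ell(\theta_T,z)\big)^{-1}$, and the factor $T$ exactly cancels the leading $\tfrac{1}{T}$, recovering $\g\ell(\theta_T,\zval)^\top\big(\sum_{z\in\dataset}\nabla^2\ell(\theta_T,z)\big)^{-1}\g\ell(\theta_T,\zstar)$, which is the stated influence function formula. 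The main obstacle is the legitimacy of the geometric-series limit: dropping the remainder $(\iden - \eta\expectedHessian)^{T}$ is valid only when the spectral radius of $\iden - \eta\expectedHessian$ is strictly below one, i.e.\ when every eigenvalue of $\eta\expectedHessian$ lies in $(0,2)$. I would therefore either add this as a standing assumption (it holds for a positive-definite $\expectedHessian$ with a suitably small constant learning rate, the natural regime near convergence) or control the remainder term explicitly so that its contribution is $\mO\!\big(\tfrac{1}{T}\|(\iden-\eta\expectedHessian)^T\|\big)$ and vanishes in the limit, which is the sense in which the ``$\approx$'' in the theorem statement should be read.
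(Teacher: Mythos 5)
Your proposal is correct and follows essentially the same route as the paper's proof: substitute the three approximations to collapse the product into the matrix power $(\iden - \eta\expectedHessian)^{T-1-\start}$, average over $\start$ to obtain a truncated matrix geometric series, approximate it by $(\eta\expectedHessian)^{-1}$, and let the factor $T$ from $\expectedHessian^{-1} = T\left(\sum_{z\in\dataset}\nabla^2\ell(\theta_T,z)\right)^{-1}$ cancel the leading $\tfrac{1}{T}$. Your explicit treatment of the discarded remainder $(\iden-\eta\expectedHessian)^T$ via the spectral-radius condition is a slightly more careful rendering of what the paper states informally as ``for $\eta$ sufficiently small and $T$ large.''
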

\textbf{Implications.} The derivation demonstrates that, under the stated approximations, influence function effectively approximates the expected influence score derived from the unrolling differentiation approach as $T$ becomes large. This approximation indicates that the influence function may not fully represent the true leave-one-out score because it relies on simplifying assumptions—such as approximating all model checkpoints and Hessian matrices to be identical—that often do not hold in practical training scenarios. In real-world settings, model parameters evolve significantly throughout training, learning rates are typically scheduled to change over time, and the Hessian matrices can vary considerably between iterations. These factors undermine the validity of the assumptions, thereby limiting the effectiveness of the influence function as an approximation for the leave-one-out score. 
\begin{proof}
Assume that all we have access to is the final model checkpoint $\theta_T := \theta_T^{(\seqtrealization)}$ for a specific realization where $\start = t_r \sim [T-1]$. Under this assumption, the best approximation we can make is: \begin{align*} \theta_T^{(\seqtstart)} \approx \theta_k^{(\seqtstart)} \approx \theta_T \end{align*} for any $\start$ and $k = 0, \ldots, T-1$. Additionally, we approximate the Hessian matrices as: \begin{align} \hessian_k^{(\seqtstart)} \approx \expectedHessian := \frac{1}{T} \sum_{z \in \dataset} \nabla^2 \ell(\theta_T, z) \end{align} and assume a constant learning rate $\eta_t = \eta$ for all $t = 0, \ldots, T-1$.

With these approximations, Equation \eqref{eq:expected-inf} simplifies to:
\begin{align}
(\ref{eq:expected-inf})
&= 
\E_{\start \sim [T-1]}
\left[ 
\eta \nabla \ell(\theta_T, \zval)^\top \left[ \prod_{k=\start+1}^{T-1} (\iden - \eta \expectedHessian) \right] \nabla \ell(\theta_{T}, \zstar) 
\right] \\
&=
\eta \nabla \ell(\theta_T, \zval)^\top 
\E_{\start \sim [T-1]}
\left[ (\iden - \eta \expectedHessian)^{T-1-\start} \right] \nabla \ell(\theta_{T}, \zstar) \\
&= 
\frac{\eta}{T} \nabla \ell(\theta_T, \zval)^\top 
\sum_{\start=0}^{T-1} 
\left( (\iden - \eta \expectedHessian)^{T-1-\start} \right) \nabla \ell(\theta_{T}, \zstar) \\
&\approx 
\frac{\eta}{T} \nabla \ell(\theta_T, \zval)^\top 
\left( \eta \expectedHessian \right)^{-1} \nabla \ell(\theta_{T}, \zstar) \\
&= \nabla \ell(\theta_T, \zval)^\top 
\left( \sum_{z \in \dataset} \g^2 \ell(\theta_T, z) \right)^{-1} \nabla \ell(\theta_{T}, \zstar)
\end{align}
The approximation in the fourth step arises from summing the geometric series of matrices. For $\eta$ sufficiently small and $T$ large, we have $\sum_{s=0}^{T-1} (\iden - \eta \expectedHessian)^{s} \approx (\eta \expectedHessian)^{-1}$. 
\end{proof}

\clearpage

\section{Additional Experiments}
\label{appendix:eval}

\subsection{Baseline \& Implementation Details}
\label{appendix:baselines}

\textbf{Fidality Evaluation (Section \ref{sec:eval-fidelity}).} 
Given the computational intensity, we conduct our experiments on a subset (10\%) of the MNIST using an MLP with two layers with 128 neurons in the hidden layer. We train the model with standard SGD with a learning rate $10^{-2}$ for 10 epochs. We randomly pick 100 data points and compute their ground-truth trajectory-specific LOO score. For the single epoch removal, we remove the data point from the last epoch. 
For this experiment, we do not use random projection and use the full gradients. For the comparison with influence function, we use the state-of-the-art implementation from LoGRA \citep{choe2024your} with the damping term set to be $10^{-3}$ following \citet{bae2022if}. 

\textbf{Large-scale Experiments in Section \ref{sec:eval-efficiency}, \ref{sec:eval-analysis} and \ref{sec:eval-qualitative}.}
Our experiments focus on two language models: Pythia-410M and GPT2-Small. We train these models on two commonly used datasets in the literature for large-scale language model training: \textbf{(1)} A 1\% subset of the Pile dataset \citep{gao2020pile}, and \textbf{(2)} Wikitext-103 \citep{merity2016pointer}. We note that our choice of model architecture size is primarily constrained by the available GPUs in our current setup. However, this limitation does not diminish the significance of our findings. With enough computational resources (e.g., 8 H100 GPUs), our method is readily applicable to perform data attribution for billion-scale model training. 

For both settings, the sequence length is set to 1024. The learning rate is set at a maximum of $3\times 10^{-4}$. We use AdamW as the optimizer with a weight decay of 0.1, and beta values set to 0.9 and 0.95. Gradients are clipped at a maximum value of 1.0 to maintain stability during training. The batch size is set to 16, with a learning rate warmup of 2000 iterations followed by cosine decay. 

For all experiments, for storage reasons, we compute and store projected gradients and data value embedding on linear layers of the model only, with the projection dimension set to 1024 per layer. However, we stress that this is not restricted by computation but by disk storage limit.

\subsection{Additional results for Fidelity Evaluation (Section \ref{sec:eval-fidelity})}
\label{appendix:eval-fidality}

\textbf{Evaluation on more epochs.} Here, we show additional results for the fidelity experiment in Section \ref{sec:eval-fidelity}, where the model is being trained for a longer time (10 epochs), in which case the model is closer to convergence. Figure \ref{fig:groundtruth-LOO-comparison-10-epochs} shows that even in this case, data value embedding still has a high Spearman correlation with the ground-truth LOO in both settings, and the influence function exhibits almost no correlation with the LOO score. 

\begin{figure}[h]
    \centering
    \setlength\intextsep{0pt}
    \setlength\abovecaptionskip{0pt}
    \setlength\belowcaptionskip{0pt}
    \includegraphics[width=\textwidth]{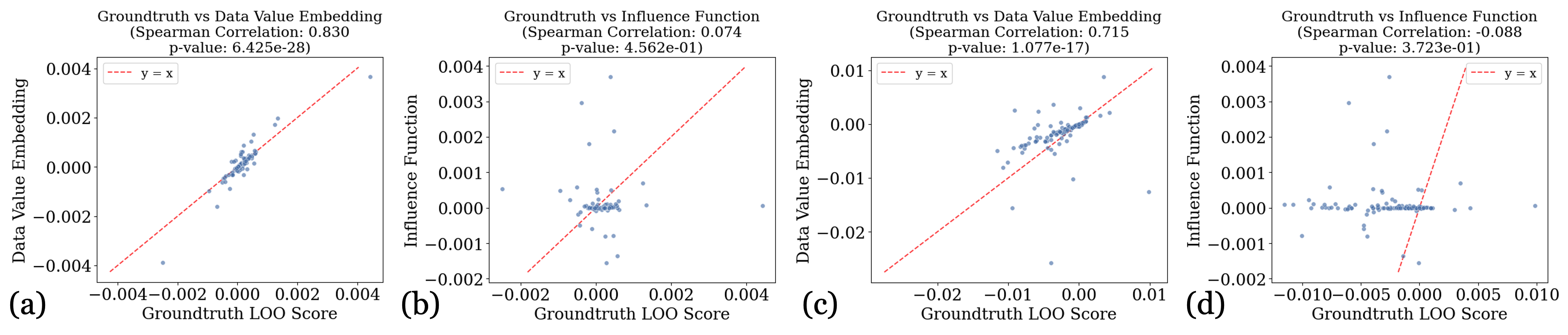}
    \caption{
    The correlation between ground-truth LOO when the MLP is trained for \textbf{10 epochs} and the estimation obtained by (a) the data value embedding method and (b) the influence function for \emph{single epoch removal}. (c) and (d) present the corresponding correlations for \emph{all-epoch removal}. 
    }
    \label{fig:groundtruth-LOO-comparison-10-epochs}
\end{figure}

\rebuttal{\textbf{Evaluation on different architectures.} To demonstrate that our method's effectiveness extends beyond simple MLPs, we evaluate data value embedding on a CNN architecture consisting of two convolutional layers (with 32 and 64 filters, respectively, each followed by 2x2 max pooling) and a final linear layer. We train the model on MNIST using SGD with learning rate $10^{-2}$ for 10 epochs. Following the same experimental setup as with MLP, we randomly select 100 data points and compute their ground-truth trajectory-specific LOO scores for both single-epoch and all-epochs removal settings. Figure \ref{fig:groundtruth-LOO-comparison-3-epochs-CNN} shows that data value embedding maintains strong correlation with ground-truth LOO scores, achieving Spearman correlations of 0.818 for single-epoch removal (Figure \ref{fig:groundtruth-LOO-comparison-3-epochs-CNN} (a)) and 0.682 for all-epochs removal (Figure \ref{fig:groundtruth-LOO-comparison-3-epochs-CNN} (b)). These results demonstrate that our method can effectively approximate data influence across different neural architectures.}

\begin{figure}[h]
    \centering
    \setlength\intextsep{0pt}
    \setlength\abovecaptionskip{0pt}
    \setlength\belowcaptionskip{0pt}
    \includegraphics[width=0.6\textwidth]{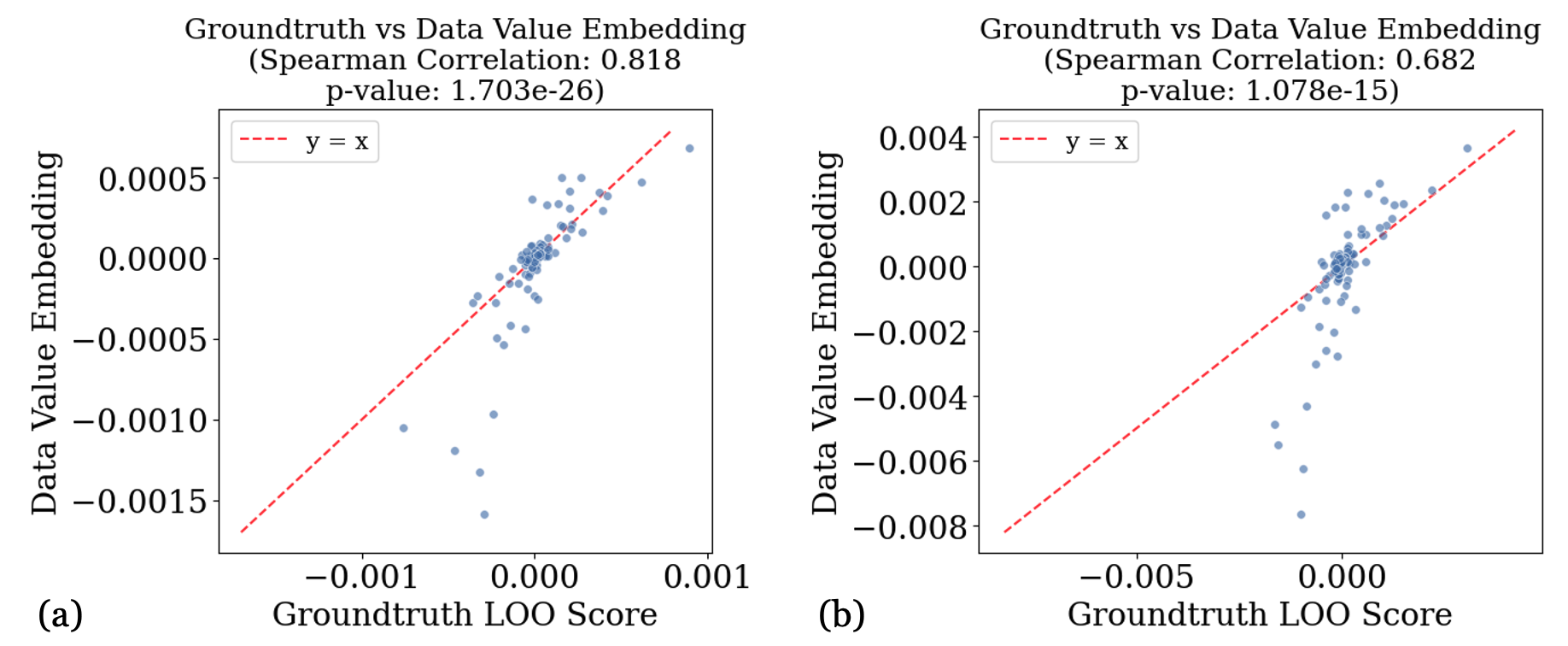}
    \caption{
    Scatter plot showing the correlation between ground-truth LOO and data value embedding method when training a small CNN on MNIST for \textbf{10 epochs}, where (a) is for \emph{single epoch removal} (b) for \emph{all-epoch removal} setting. 
    }
    \label{fig:groundtruth-LOO-comparison-3-epochs-CNN}
\end{figure}

\subsubsection{\rebuttal{Effectiveness on Mislabeled Data Detection and Data Selection}}
\label{appendix:mislabel-detection}

In addition to comparing our results with ground-truth training run-specific LOO, we further evaluate the performance of our data value embedding algorithm in the task of mislabeled data detection \rebuttal{and data selection}, two standard benchmarks in data attribution literature. We compare several data attribution baselines, including \emph{Retraining-based Data Shapley \citep{ghorbani2019data}, KNN-Shapley \citep{jia2019efficient}, Influence Function \citep{koh2017understanding}, Trak \citep{park2023trak}, Empirical Influence Functions \citep{feldman2020neural}, and Datamodels \citep{ilyas2022datamodels}}. 

\textbf{Experiment settings.} We use ImageNet-pretrained ResNet18 as the architecture in the experiment. Given the computational intensity of retraining-based methods, we conduct our experiments on a subset of 1,000 samples from CIFAR-10 dataset. 
We use Adam with a learning rate 0.001, weight decay of 1e-4, and label smoothing of 0.1 over 50 epochs. The learning rate is reduced by a factor of 0.1 every 10 epochs. The batch size is set to 64. For retraining-based techniques (Retraining-based Data Shapley, Empirical Influence Functions, Datamodels), we estimate the corresponding attribution scores with 1000 model training runs. For Trak, we set the projection dimension to be 2048. For KNN-Shapley, we set $K=5$ and use the features extracted from the last linear layer of ResNet18. Both experiments included 10\% random label noise to reflect the challenges of real-world data.

\begin{remark}
The mislabeled data detection \rebuttal{and data selection} benchmark here mainly serves to evaluate the fidelity of our algorithm in settings where ground-truth LOO computation is infeasible. However, we stress that data value embedding is not specifically designed for those tasks. Rather, it is developed as an interpretability tool and a mechanism for real-time data valuation, with potential applications in data marketplaces and addressing AI copyright disputes \citep{wang2024economic}. 
\end{remark}

\textbf{I. Mislabeled Data Detection.}

Table \ref{tb:mislabel-detection} shows that KNN-Shapley achieves the highest accuracy in detecting mislabeled data, likely due to its sensitivity to label inconsistencies. \emph{Retraining-based methods} (Retraining-based Data Shapley, Empirical Influence Functions, Datamodels) exhibit the lowest performance, which can be attributed to the inefficiency of Monte Carlo sampling and the inherent stochasticity during retraining, as discussed in \citet{wang2023data}. Among techniques requiring only a single training run, Trak underperforms relative to other methods. This observation aligns with findings from its original paper \citep{park2023trak}, which suggests that ensemble methods are often necessary for optimal performance. Notably, data value embedding and influence function achieve comparable performance, outperforming all other techniques except KNN-Shapley. The strong performance of these methods likely stems from their deterministic nature, which provides more consistent and reliable results.

\begin{table}[t]
\centering
\resizebox{0.75\columnwidth}{!}{\begin{tabular}{@{}cc@{}}
\toprule
\textbf{Method} & \textbf{Performance (Mean ± Std)} \\ 
\midrule
\textbf{Data Shapley} \citep{ghorbani2019data}  & 0.582 (0.029) \\
\textbf{Empirical Influence Function} \citep{feldman2020neural}  & 0.552 (0.017) \\
\textbf{Datamodels} \citep{ilyas2022datamodels}                  & 0.520 (0.008) \\ 
\textbf{KNN-Shapley} \citep{jia2019efficient, wang2023noteknn}   & 0.760 (0.018) \\
\textbf{Trak} \citep{park2023trak}                               & 0.511 (0.012) \\
\textbf{Influence Function} \citep{koh2017understanding}          & 0.654 (0.054) \\ 
\textbf{Data Value Embedding (ours)}                             & 0.667 (0.031) \\ 
\bottomrule
\end{tabular}
}
\vspace{-2mm}
\caption{
AUROC scores of mislabeled data detection task with various data attribution techniques on CIFAR10 dataset. The higher the AUROC score is, the better the method is. The results are across three different training runs (the randomness comes from construction of corrupted datasets), where we show the standard deviation in ().
}
\label{tb:mislabel-detection}
\end{table}

\rebuttal{\textbf{II. Data Selection.}}

\rebuttal{
Table \ref{tb:data-selection} demonstrates that Data Value Embedding outperforms \emph{all} existing data valuation methods in the task of data selection. \emph{Retraining-based methods} (Data Shapley, Empirical Influence Functions, Datamodels) show limited effectiveness due to the high variance introduced by Monte Carlo sampling and learning stochasticity. While the influence function and Trak do not require model retraining, their performance is constrained by assumptions that often do not hold in practice, such as model convergence and strong convexity. KNN-Shapley provides stable valuation results. However, it assigns similar scores to similar data points, potentially reducing dataset diversity among the selected data subset. In contrast, Data Value Embedding considers both data characteristics and temporal ordering in training, allowing similar data points to receive different scores based on when they appear in the training sequence. This temporal awareness helps maintain dataset diversity while identifying valuable samples.
}


\begin{table}[h]
\centering
\resizebox{\columnwidth}{!}{\begin{tabular}{@{}ccccc@{}}
\toprule
\textbf{}                              & \textbf{20\%}           & \textbf{40\%}           & \textbf{60\%}           & \textbf{80\%}           \\ \midrule
\textbf{Random}                        & 0.350 (0.010)          & 0.461 (0.010)          & 0.525 (0.004)          & 0.559 (0.003)          \\
\textbf{Data Shapley} \citep{ghorbani2019data}                  & 0.317 (0.047)          & 0.468 (0.010)          & 0.527 (0.004)          & 0.570 (0.008)          \\
\textbf{Empirical Influence Function} \citep{feldman2020neural}  & 0.342 (0.004)          & 0.466 (0.016)          & 0.530 (0.009)          & 0.568 (0.010)          \\
\textbf{Datamodels} \citep{ilyas2022datamodels}                    & 0.342 (0.004)          & 0.465 (0.004)          & 0.534 (0.010)          & 0.559 (0.005)          \\
\textbf{KNN-Shapley} \citep{jia2019efficient}                   & 0.354 (0.017)          & 0.478 (0.007)          & 0.525 (0.015)          & 0.563 (0.005)          \\
\textbf{Trak} \citep{park2023trak}                          & 0.329 (0.021)          & 0.443 (0.030)          & 0.517 (0.016)          & 0.572 (0.009)          \\
\textbf{Influence function} \citep{koh2017understanding}                           & 0.320 (0.033)          & 0.450 (0.028)          & 0.530 (0.015)          & 0.580 (0.004)          \\
\textbf{Data Value Embedding (ours)}          & \textbf{0.391 (0.007)} & \textbf{0.518 (0.008)} & \textbf{0.566 (0.005)} & \textbf{0.604 (0.009)} \\
\bottomrule
\end{tabular}}
\vspace{-2mm}
\caption{
Test accuracies when training ResNet18 on high-value data points selected by various data attribution techniques. To be able to compare with techniques that require model retraining, for each training run we randomly sample a size-1000 subset of CIFAR10 dataset (with 10\% data points being mislabeled). The results are across three different training runs (the randomness comes from construction of corrupted datasets), where we show the standard deviation in ().
}
\label{tb:data-selection}
\end{table}

\subsection{Additional discussion and results for Section \ref{sec:eval-analysis}}
\label{appendix:eval-dynamics}

\subsubsection{Explanation of influence trend}

\textbf{1. Parameter initialization and warmup training are important for the final model performance.} The blue curve in Figure \ref{fig:forgetting-and-intuition} (b) illustrates the trend of average training data gradient norm throughout the training process. We observe that gradient norms are typically large and unstable during early training ($\start \le 2000$). As training progresses, these norms decrease rapidly, leading to a significant reduction in the eigenvalues of the Hessian matrix $\hessian_t \approx \sum_{z \in \batcht} \g \ell(\theta_{t}, z)\g \ell(\theta_{t}, z)^\top$. Consequently, when $\norm{\g \ell(\theta_{\start})}$ is significantly larger than later training gradients, the norm of data value embedding $\norm{\prod_{t=\start+1}^T (\iden - \eta_t \hessian_t) \g \ell(\theta_{\start})}$ remains substantial. This results in early-stage data points maintaining significant influence until the end of training. Figure \ref{fig:forgetting-and-intuition} (a) further illustrates this phenomenon. The \rebuttal{purple} curve shows that training data points from the High-impact Warmup Phase, while experiencing large drops in influence as training progresses, still maintain higher influence than later data points. This observation aligns with the well-established effect that model initialization and/or warm-up training plays a crucial role in training performance, effectively initializing model parameters and gradually preparing the model for more complex learning tasks. 

\textbf{2. Influence saturation from future data.} As the model enters a relatively smooth loss regime ($\start > ~2000$ in Figure \ref{fig:forgetting-and-intuition} (b)), the training data gradient norm decreases much more slowly. In this phase, the magnitude deflation effect from $\prod_{t=\start}^T (\iden - \eta_t \hessian_t)$ remains significant for relatively small $\start$, while the training gradient norm $\norm{\g \ell(\theta_{\start})}$ does not differ significantly between earlier and later training points. This results in $\norm{\prod_{t=\start}^T (\iden - \eta_t \hessian_t) \g \ell(\theta_{\start})} < \norm{\g \ell(\theta_{\ta})}$ for $\ta > \start$, creating a low-impact basin during the early-to-middle training stage. In this basin, influence scores are lower than those of data points from both the very early and later training stages. The \rebuttal{red} curve in Figure \ref{fig:forgetting-and-intuition} (a) demonstrates this trend, showing influence scores for these points gradually decreasing during training and eventually falling below those of later training data points. This pattern aligns with the phenomenon of catastrophic forgetting, where the model appears to "forget" the influence of data from this middle phase as training progresses. 
One might initially think this phenomenon is connected to catastrophic forgetting, where the model appears to "forget" the influence of data from earlier training phases as it progresses. However, we note that a data point's influence score decreases the most when future data points are similar to it, which is different from catastrophic forgetting. Intuitively, if future points are identical, the presence of the earlier data point in training becomes less relevant to the model's behavior.

In Figure 7(b), we consider a simplified setting where we approximate Hessian with GGN matrix and assume all training gradients are orthogonal across different iterations. Under these assumptions, $\hessian_k \approx \bm{G}_k = \sum_{z\in B_k} \nabla\ell(\theta_k,z)\nabla\ell(\theta_k,z)^\top$ becomes a sum of rank-1 matrices that have non-overlapping eigenspaces. Given the orthogonality assumption, we have $\bm{G}_t \bm{G}_s = 0$ for $t \neq s$, and the product $\prod_{t=t_s}^T (I - \eta_t \bm{G}_t)$ simplifies to $I - \sum_{t=t_s}^T \eta_t \bm{G}_t$. Since each $\bm{G}_t = \sum_{z \in B_t} \nabla \ell(\theta_t, z) \nabla \ell(\theta_t, z)^\top$ is a sum of rank-1 matrices along orthogonal directions, the trace of this product can be analytically computed as $p - \sum_{t=t_s}^T \eta_t \sum_{z \in B_t} \norm{\nabla \ell(\theta_t, z)}^2$, where $p$ is the dimension of parameter space. Furthermore, if we assume $\nabla \ell(\theta_{t_s})$ follows a Gaussian distribution, then $\norm{(I - \sum_{t=t_s}^T \eta_t G_t)\nabla \ell(\theta_{t_s})}$ follows a scaled chi distribution since it's the norm of a Gaussian vector after linear transformation by an orthogonal projection matrix. This enables us to analytically compute its expected value, as shown by the green curve in Figure \ref{fig:forgetting-and-intuition} (b).

\begin{figure}[h]
    \centering
    \setlength\intextsep{0pt}
    \setlength\abovecaptionskip{0pt}
    \setlength\belowcaptionskip{0pt}
    \includegraphics[width=\textwidth]{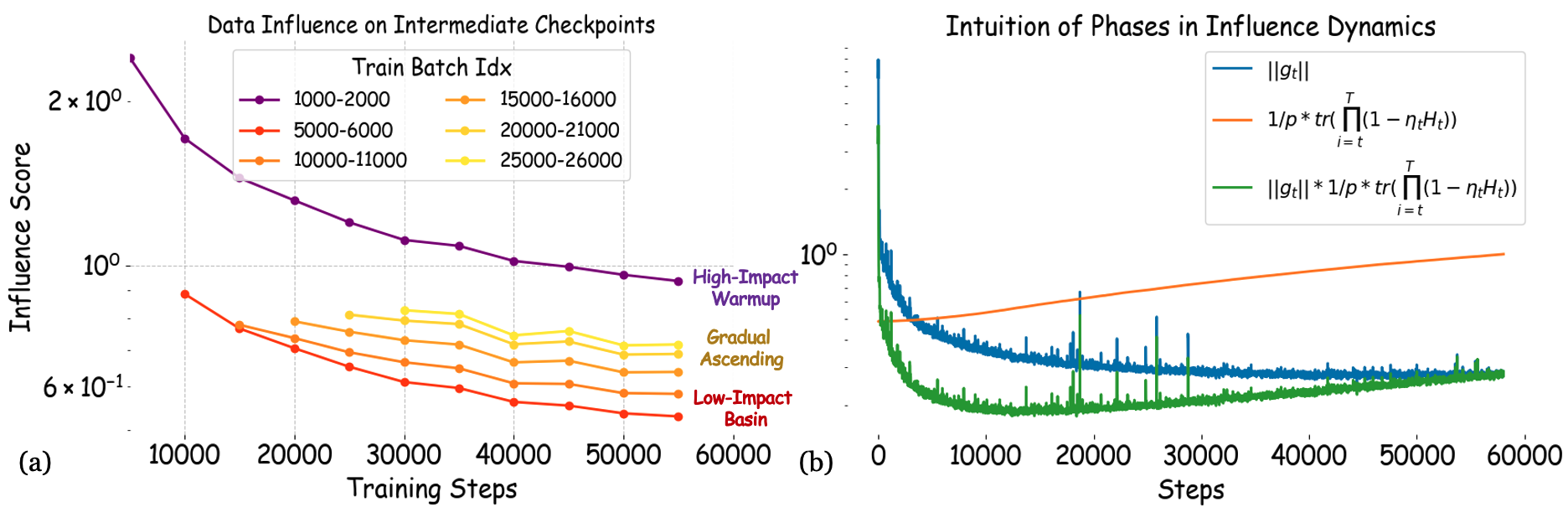}
    \caption{
    \textbf{(a)} (same as Figure \ref{fig:influence-saturation} in the main paper) Influence scores of data points from different training stages on intermediate model checkpoints throughout training. The x-axis denotes the number of training iterations, and the y-axis represents the influence score of selected data points on the model at each checkpoint.
    \textbf{(b)} The blue curve shows the average gradient changes as model training progresses. 
    The orange and green curves are analytical curves under a simplified setting, where the orange curve is the analytical trace of $\prod_{t=\start}^T (\iden - \eta_t \hessian_t)$ as $\start$ increases, and the green curve shows the norm of data value embedding for Gaussian-distributed gradient under this simplified setting. 
    }
    \label{fig:forgetting-and-intuition}
\end{figure}

\subsubsection{Additional Details for Figure \ref{fig:influence-plot-pythia-Pile} (b)}
\label{appendix:eval-dataselection-detail}

\rebuttal{In Figure 1(b), we compare different strategies for applying online data selection during model training. The online selection process identifies high-quality training batches by (1) sampling a candidate pool of training points with size 2B, where B is the desired batch size, (2) computing the gradient cosine similarity between each candidate point and a small validation batch (randomly sampled from the full validation set), and (3) selecting the B points with the highest similarity scores to form the next training batch. This procedure incurs significant computational overhead, requiring additional forward and backward passes for similarity computation at each selection step. When not performing online selection (i.e., during the "random selection" phases), we simply sample training batches randomly. Notably, the model processes the entire training dataset regardless of the selection strategy - what varies is only how batches are prioritized during different training phases. The "Early+Late" strategy applies online selection only during iterations 1-2000 and after iteration 20000, while using random selection in between. This selective approach achieves 96\% of the performance improvement of continuous selection while reducing the computational overhead by more than 5×, suggesting that precise batch selection is most critical during the early and late training phases.}

\subsubsection{Additional Results}

Figure \ref{fig:dynamics-gpt2-pretrain}
presents additional results on the data influence scores of training data across different stages of LLM pretraining, using more datasets and model architectures. We observe that the data influence scores on the final model can consistently be categorized into three distinct regimes throughout pretraining.

Figure \ref{fig:dynamics-gpt2-finetune} shows the results when using pretrained models downloaded from Huggingface. In this scenario, the situation diverges across different datasets. Notably, when we continually pretrain on the Pile dataset, no gradual ascending phase is observed at the end. However, when GPT-2 has already been pre-trained, continuing pretraining on Wikitext-103 once again exhibits a gradual ascending phase. This is likely because Wikitext-103 is a relatively small dataset, and fine-tuning it for three epochs can easily lead to overfitting, as illustrated in Figure \ref{fig:loss-curve} (d).

\begin{figure}[h]
    \centering
    \setlength\intextsep{0pt}
    \setlength\abovecaptionskip{0pt}
    \setlength\belowcaptionskip{0pt}
    \includegraphics[width=0.75\textwidth]{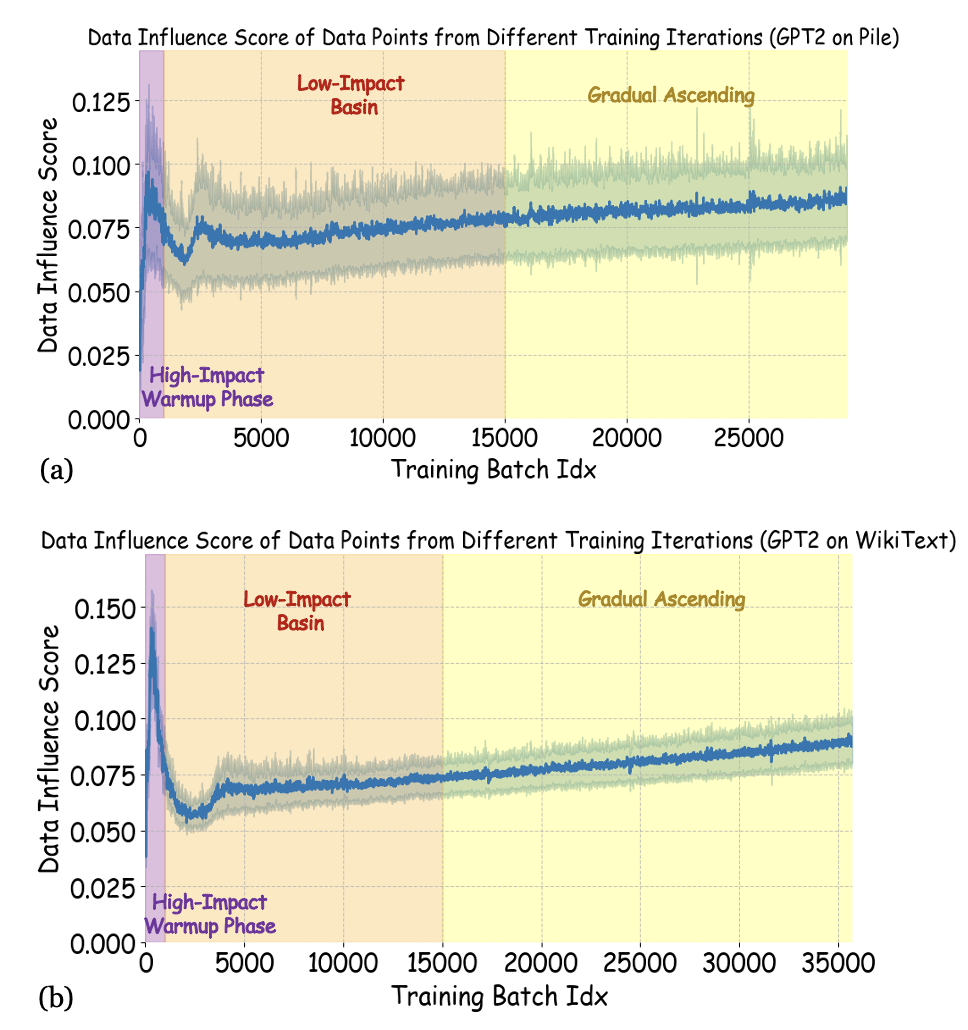}
    \caption{
    Average data influence scores per training batch, measured against the final model's loss 
    where (a) GPT2 trained on 1\% of Pile, and (b) GPT2 trained on WikiText-103M for 3 epochs.
    }
    \label{fig:dynamics-gpt2-pretrain}
\end{figure}

\begin{figure}[h]
    \centering
    \setlength\intextsep{0pt}
    \setlength\abovecaptionskip{0pt}
    \setlength\belowcaptionskip{0pt}
    \includegraphics[width=0.75\textwidth]{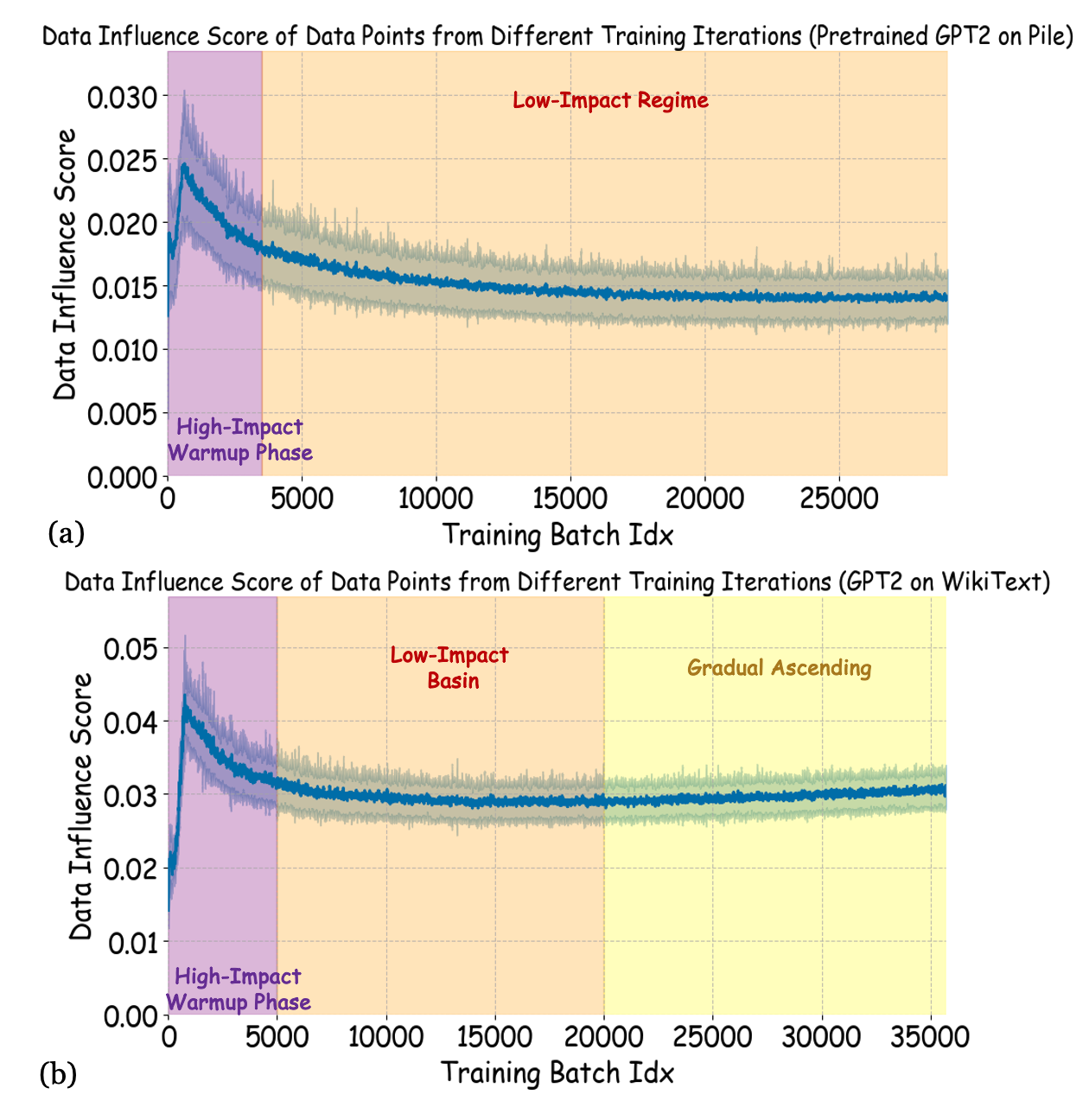}
    \caption{
    Average data influence scores per training batch, measured against the final model's loss 
    where (a) Pretrained GPT2 trained on 1\% of Pile, and (b) Pretrained GPT2 trained on WikiText-103M for 3 epochs.
    }
    \label{fig:dynamics-gpt2-finetune}
\end{figure}

\begin{figure}[h]
    \centering
    \setlength\intextsep{0pt}
    \setlength\abovecaptionskip{0pt}
    \setlength\belowcaptionskip{0pt}
    \includegraphics[width=0.75\textwidth]{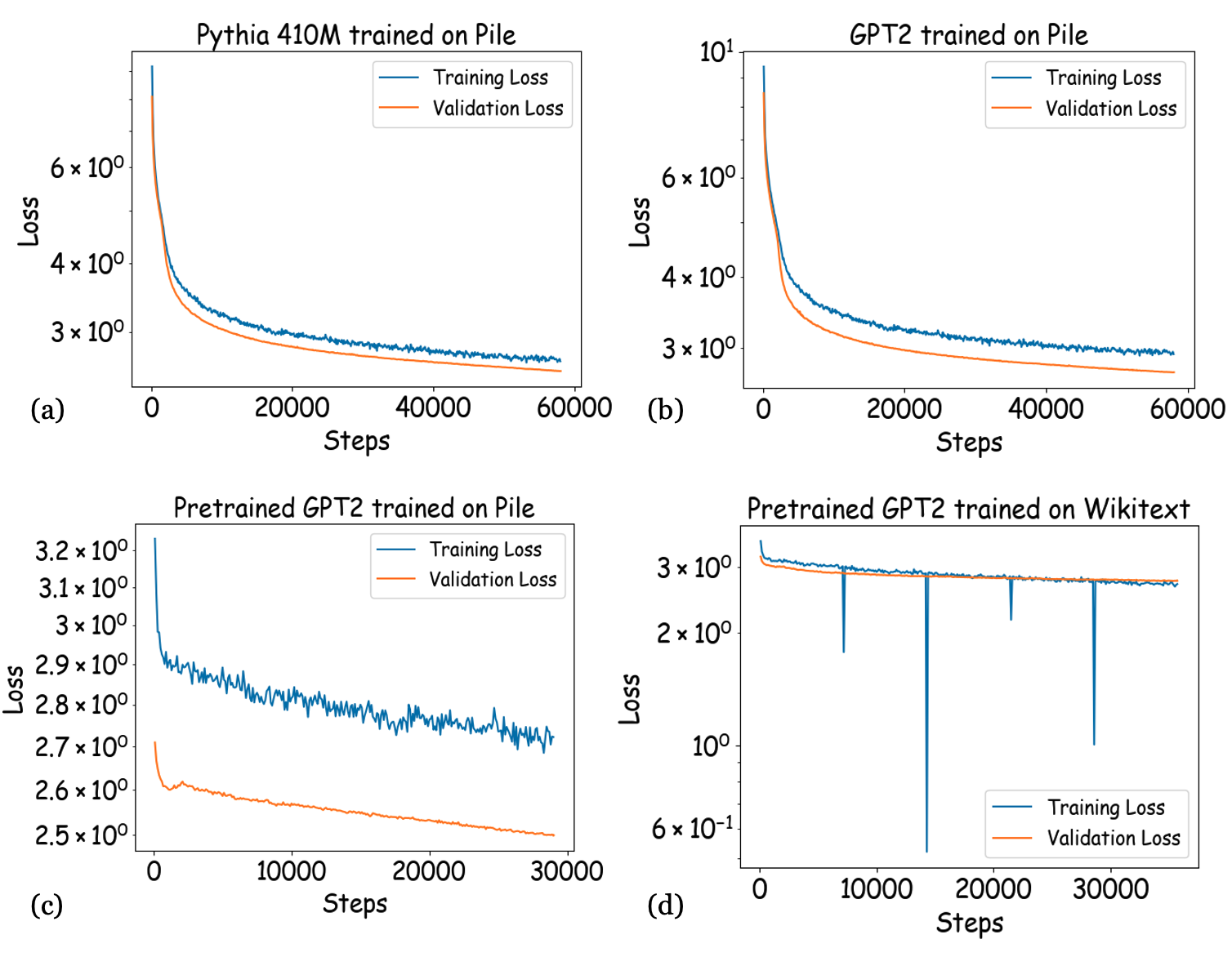}
    \caption{
    Loss curve for the training.
    }
    \label{fig:loss-curve}
\end{figure}

\clearpage

\subsection{Ablation Study: Error from Projection Dimension}
\label{appendix:eval-projection-error}

We examine the error introduced by the random projection of gradient vectors, as discussed in Section \ref{sec:store-gradient}. Specifically, we evaluate the Spearman correlation between data influence scores when using per-layer projection dimensions in $\{256, 1024, 2304\}$ and compare these to a larger per-layer projection dimension of $4096$. Since computing the ground truth without projection is infeasible, we use projection dimension $4096$ as a reference point for comparison. Additionally, we compare our results to LoGRA \citep{choe2024your}, the most efficient current implementation of the influence function, which also employs random projection to store data attribution information. Due to the computational and disk storage constraints, these experiments were conducted using GPT-2, trained on 5\% of the Wikitext-103 dataset. The results, shown in Figure \ref{fig:projection-error} (a), indicate that our data value embedding method achieves a higher Spearman correlation compared to the influence function. While our results demonstrate a clear advantage, a more in-depth analysis of the observed improvements would be an interesting direction for future research.

\begin{figure}[h]
    \centering
    \setlength\intextsep{0pt}
    \setlength\abovecaptionskip{0pt}
    \setlength\belowcaptionskip{0pt}
    \includegraphics[width=\textwidth]{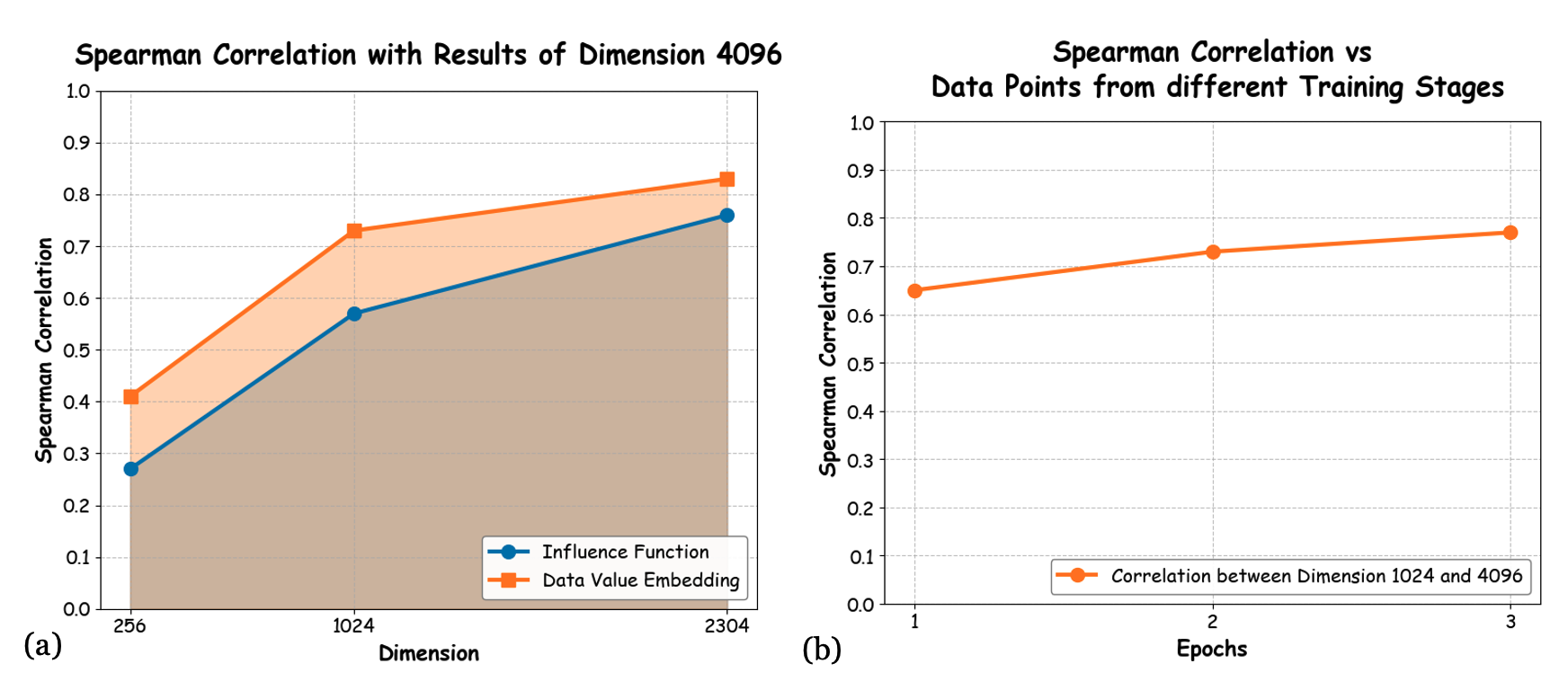}
    \caption{
    (a) Comparison of Spearman correlation between data influence scores as a function of projection dimension. 
    }
    \label{fig:projection-error}
\end{figure}

\subsection{\rebuttal{Examples of Bad Data}}
\label{appendix:eval-baddata}

\rebuttal{To demonstrate Data Value Embedding's capability in identifying potentially problematic training examples, we examined the training data points from the Pile dataset \citep{gao2020pile} that received the most negative influence scores under the same experiment settings in Section \ref{sec:eval-analysis}. Figure \ref{fig:bad-examples-1} and \ref{fig:bad-examples-2} show these examples and their influence scores.}

\rebuttal{Our analysis revealed several types of training data that could potentially harm model performance. First, we find quite a few code-related samples that, while syntactically valid, provide minimal educational value for language modeling. These include YAML configurations with simple numeric arrays, raw binary data represented as hexadecimal strings, and pixel-by-pixel image data in array format. Such examples contain little information about code structure or programming patterns while introducing noise into the training process.}

\rebuttal{Beyond code snippets, we found examples of text data that could potentially bias the model's learning. For instance, math problems (Figure \ref{fig:bad-examples-1}'s last example) that follow identical question formats could bias the model toward specific phrasings (e.g., \emph{"What is $\ldots$"}) rather than developing diverse language understanding. We also identified articles that, while containing meaningful content about important topics like data privacy, suffer from poor formatting with missing punctuation and paragraph breaks (Figure \ref{fig:bad-examples-2}'s second example). Such poorly formatted content, while topically relevant, could potentially degrade the model's ability to learn proper text formatting, punctuation usage, and document structure.}

\begin{figure}[h]
\centering
\includegraphics[width=\textwidth]{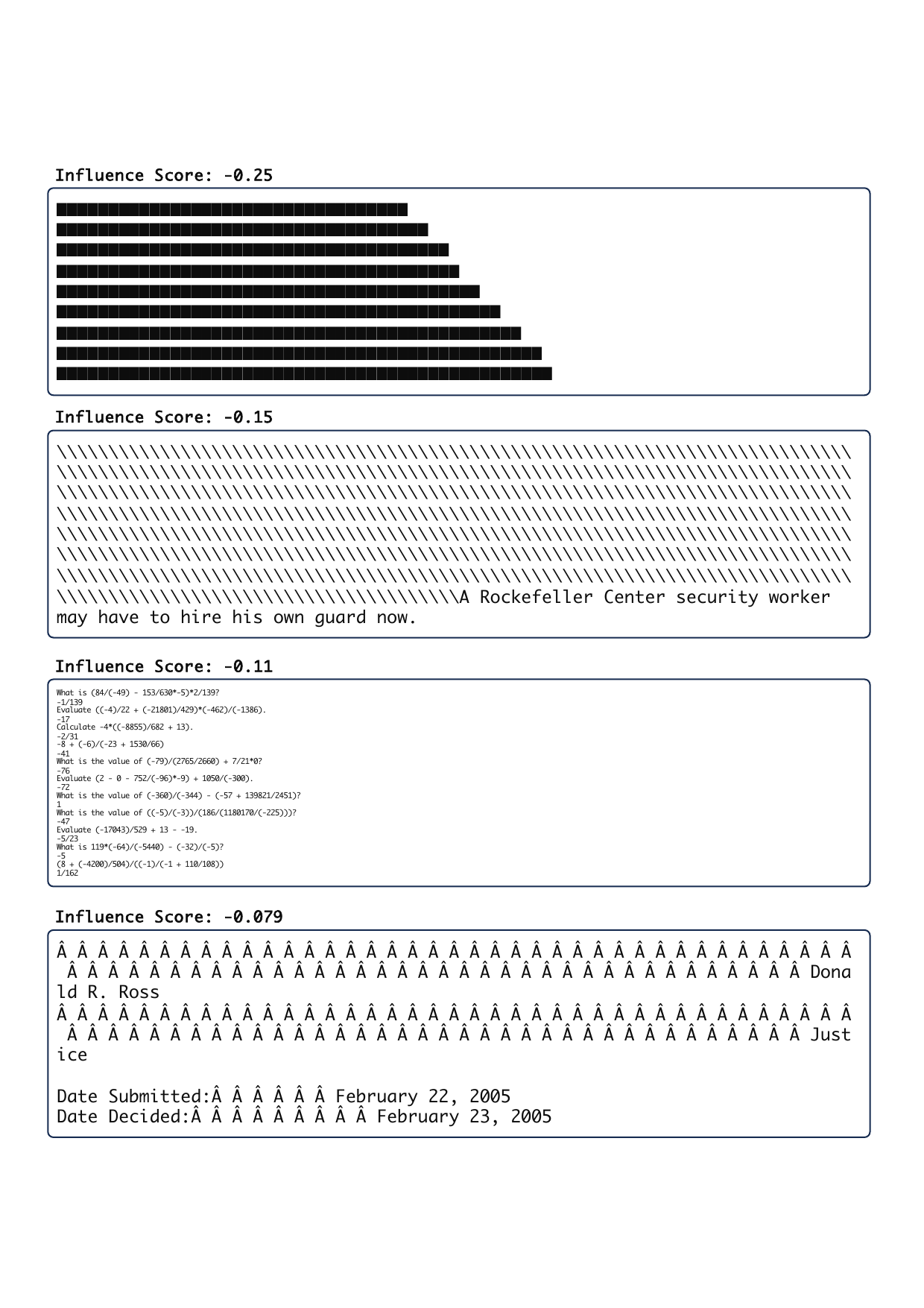}
\caption{Examples of training data from the Pile dataset identified as potentially problematic by our method, along with their influence scores. The examples include configuration files that provide minimal learning value for language modeling, as well as repetitive mathematical problems with identical question formats.}
\label{fig:bad-examples-1}
\end{figure}

\begin{figure}[h]
\centering
\includegraphics[width=\textwidth]{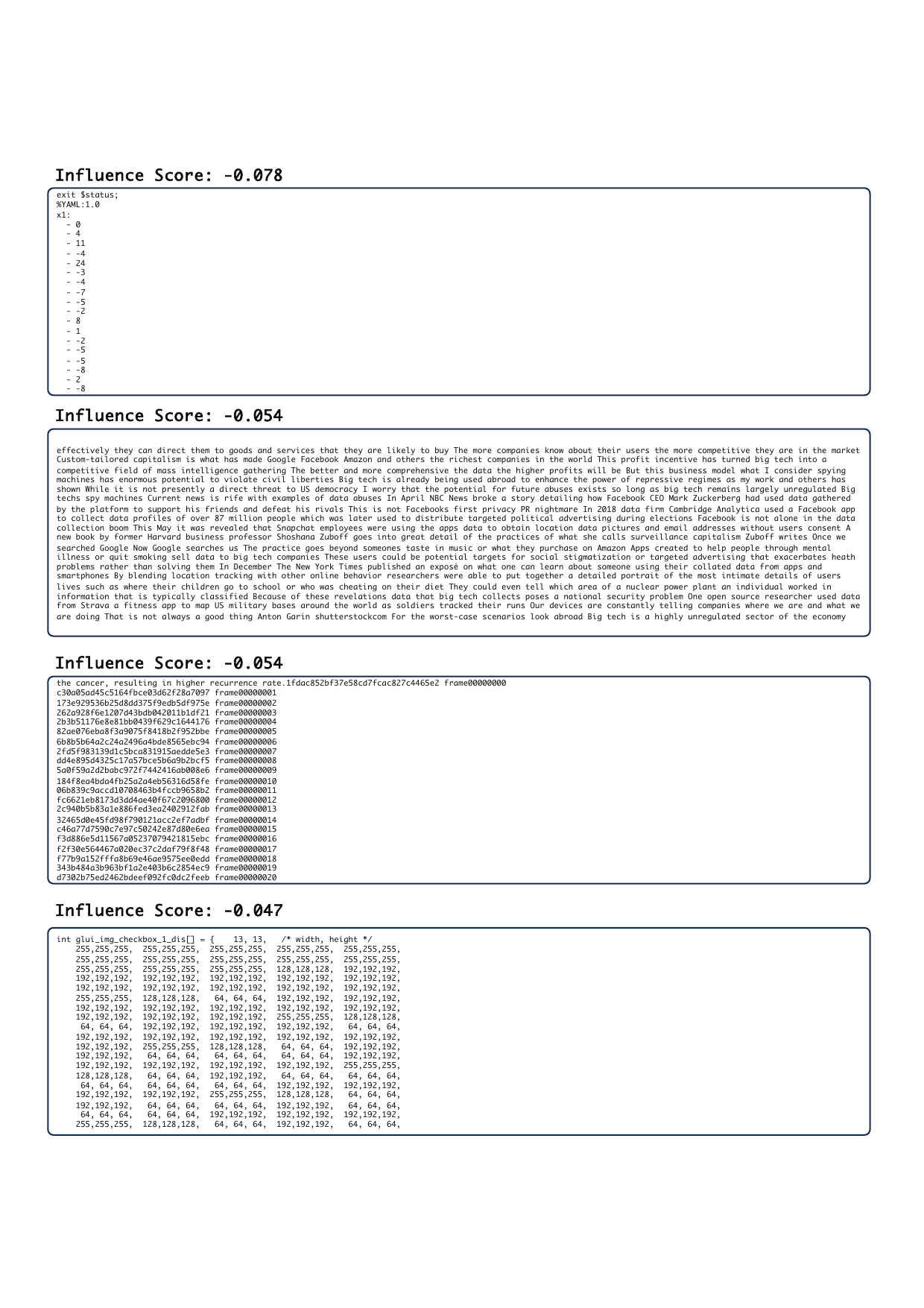}
\caption{Additional examples of training data identified as potentially problematic, showing text content with poor formatting (missing punctuation and paragraph breaks) and low-information code snippets with repetitive numeric arrays.}
\label{fig:bad-examples-2}
\end{figure}

\end{document}